\documentclass{article} 
\usepackage[preprint]{colm2026_conference}

\usepackage{microtype}
\usepackage{hyperref}
\usepackage{url}
\usepackage{booktabs}
\usepackage{amsmath}
\usepackage{amsthm}
\usepackage{graphicx}
\usepackage{subcaption}
\usepackage{multirow}
\usepackage{amsmath, amssymb, amsthm}
\usepackage{mathtools}        
\usepackage{bm}               
\usepackage{wrapfig}
\usepackage[table]{xcolor}

\usepackage{algorithm}
\usepackage{algpseudocode} 
\usepackage{enumitem}

\theoremstyle{plain}
\newtheorem{theorem}{Theorem}[section]
\newtheorem{proposition}[theorem]{Proposition}

\newcommand{\boldhdr}[1]{
    \vspace{-0.05cm}
    \noindent\textbf{#1}.
}
\newcommand{\rateinline}[2]{#1\color{gray}{\tiny$\pm$ #2}}

\usepackage{lineno}

\definecolor{darkblue}{rgb}{0, 0, 0.5}
\hypersetup{colorlinks=true, citecolor=darkblue, linkcolor=darkblue, urlcolor=darkblue}

\title{Diversity-Aware Reverse Kullback-Leibler Divergence for Large Language Model Distillation}

\author{
    Hoang-Chau Luong$^{1}$ \quad 
    Dat Ba Tran$^{2}$ \quad
    Lingwei Chen$^{1}$ \\
    $^{1}$Rochester Institute of Technology, Rochester, NY, USA \\
    $^{2}$Rowan University, Glassboro, NJ, USA \\
    \texttt{cl6300@rit.edu, trandb@rowan.edu, lwcics@rit.edu} \\
}

\begin{document}

\ifcolmsubmission
\linenumbers
\fi

\maketitle

\begin{abstract}
    Reverse Kullback–Leibler (RKL) divergence has recently emerged as the preferred objective for large language model (LLM) distillation, consistently outperforming forward KL (FKL), particularly in regimes with large vocabularies and significant teacher–student capacity mismatch, where RKL focuses learning on dominant modes rather than enforcing dense alignment. However, RKL introduces a structural limitation that drives the student toward overconfident predictions. We first provide an analysis of RKL by decomposing its gradients into target and non-target components, and show that non-target gradients consistently push the target logit upward even when the student already matches the teacher, thereby reducing output diversity. In addition, RKL provides weak supervision over non-target classes, leading to poor tail alignment. To address these issues, we propose Diversity-aware RKL (DRKL), which removes this gradient effect and strengthens non-target supervision while preserving the optimization benefits of RKL. Extensive experiments across datasets and model families demonstrate that DRKL consistently outperforms FKL, RKL, and other state-of-the-art distillation objectives, achieving better performance and a superior fidelity–diversity trade-off.
\end{abstract}

\vspace{-0.2cm}
\section{Introduction}
\vspace{-0.1cm}

Knowledge distillation (KD)~\citep{hinton2015distilling} aims to transfer knowledge from large language models (LLMs) to smaller, more efficient students~\citep{wu2024weight}, typically by minimizing a Kullback–Leibler (KL) divergence between their output distributions. While forward KL (FKL) has been the standard objective in early distillation work~\citep{hinton2015distilling, cho2019efficacy, mirzadeh2020improved, jin2023multi, sun2024logit}, recent LLM distillation methods increasingly favor reverse KL (RKL) due to its superior empirical performance~\citep{gu2023minillm, gu2025miniplm, agarwal2024policy}. This advantage is commonly attributed to the \textit{mode-seeking} behavior of RKL, which concentrates probability mass on dominant teacher modes, whereas FKL is more \textit{mass-covering} and encourages matching the full distribution~\citep{chan2022greedification}. However, prior work~\citep{wu2025rethinking} suggests that FKL and RKL converge to the same optimum under sufficient optimization, implying that their differences mainly manifest during early training.

In this work, we argue that the advantage of RKL arises from its alignment with the \textit{structure} of LLMs, rather than being limited to the early stages of training. The theoretical equivalence between FKL and RKL relies on assumptions (e.g., sufficient student capacity, exact optimization to the global optimum) that rarely hold in modern LLM distillation, particularly under large vocabularies and significant teacher–student capacity mismatch. In such settings, FKL enforces probability matching across the entire vocabulary, requiring the student to approximate the teacher's long-tail distribution, which is inherently challenging for capacity-limited models. In contrast, RKL focuses learning on high-probability tokens while suppressing low-probability mass, effectively reducing the complexity of the alignment problem. This leads to the better performance of RKL observed in practice~\citep{gu2023minillm,kim2024promptkd}.

Despite these advantages, RKL introduces fundamental limitations. First, it creates a shortcut that rapidly minimizes the non-target loss by increasing the confidence of the target class, without properly aligning the non-target distributions of the teacher and student. As a result, knowledge transfer is incomplete, since the RKL objective can remain low despite poor non-target alignment. Second, by concentrating probability mass on dominant tokens, it systematically reduces the entropy of the student distribution, resulting in overconfident predictions and diminished output diversity. Empirically, students trained with RKL often exhibit higher confidence than their teachers, creating a clear quality–diversity imbalance.

To better understand these behaviors, we theoretically and empirically analyze RKL by decomposing the gradients into target and non-target components. While similar decompositions have been explored for FKL in vision settings~\citep{zhao2022decoupled, wei2024scaled, cui2024decoupled}, the optimization dynamics of RKL in large-vocabulary LLM distillation remain largely unexplored. Our analysis reveals a distinct mechanism: non-target gradients consistently push the target logit upward, continuing to increase it even after the student's target probability matches that of the teacher, which explains both limitations. Motivated by this observation, we propose \textbf{Diversity-aware RKL (DRKL)}, which removes the contribution of non-target gradients to the target logit and strengthens non-target supervision while preserving the optimization benefits of RKL. Extensive experiments across multiple datasets and model families show that DRKL consistently improves the fidelity–diversity trade-off.

Our contributions are summarized as follows:
\begin{itemize}[leftmargin=20pt]
    \item We show that the empirical advantage of RKL over FKL in LLM distillation arises from its alignment with the structural properties of LLMs. Specifically, under large vocabularies and teacher–student capacity mismatch, RKL simplifies alignment by concentrating learning on dominant modes (in Section~\ref{sec:equivalence_breakdown}).
    
    \item We theoretically identify key limitations of RKL (in Section~\ref{sec:reverse_analysis}). First, RKL creates a shortcut that minimizes the non-target loss without improving alignment over non-target classes (in Proposition~\ref{prop:rev_dkd_statement}). Second, non-target gradients with respect to target logit is consistently negative, which pushes the target logit upward and leads to overconfident and low-diversity predictions (in Proposition~\ref{prop:rev_tckd_instability}).
    
    \item We propose DRKL to address limitations, which removes the non-target gradients with respect to the target logit and strengthens non-target supervision (in Section~\ref{sec:drkd}). We also present experiments across datasets and model families, demonstrating that DRKL consistently improves performance and fidelity–diversity trade-off (in Section~\ref{sec:experiments}).
\end{itemize}

\vspace{-0.2cm}
\section{Related Works}
\vspace{-0.1cm}

\boldhdr{KD for LLM distillation}
KD~\citep{hinton2015distilling} is widely used to compress LLMs into smaller and more efficient students~\citep{wu2024weight}. Existing approaches can be broadly categorized into black-box and white-box distillation. Black-box KD relies solely on the outputs of the teacher model. For example, sequence-level KD~\citep{kim2016sequence} trains the student on responses generated by the teacher. In contrast, white-box KD leverages richer teacher signals during training, typically by aligning token-level output distributions between teacher and student. This alignment is commonly achieved by minimizing a KL divergence between their logits during token generation, using either FKL~\citep{hinton2015distilling,kim2023token} or RKL~\citep{gu2023minillm, agarwal2024policy, kim2024promptkd}.

\boldhdr{KL objectives in KD}
FKL was originally introduced for KD by~\citet{hinton2015distilling} and has been widely adopted across vision and language tasks. Recent LLM distillation methods increasingly employ RKL due to its stronger empirical performance~\citep{gu2023minillm, agarwal2024policy}. Several works investigate the relationship between FKL and RKL. For example, \citet{wu2025rethinking} show that the two objectives converge to the same optimum under sufficient optimization and propose AKL to balance their behavior during early training. Similarly, AB~\citep{wang2025abkd} balances mode-seeking and mean-seeking effects, while SFKL and SRKL~\citep{ko2024distillm} alleviate teacher–student mismatch by interpolating between teacher and student distributions. Another line of work studies distillation objectives through target–non-target decomposition~\citep{zhao2022decoupled, wei2024scaled, cui2024decoupled}, which decomposes FKL into target and non-target components, showing that these terms contribute differently to training dynamics. These methods primarily address the vanishing gradients for non-target classes in vision tasks. In contrast, we leverage this decomposition to analyze the structural limitation of RKL in the large-vocabulary regime of LLM distillation, and uncover that non-target gradients exert persistent upward pressure on the target logit, driving the student toward overconfident and low-diversity predictions.

\vspace{-0.2cm}
\section{Preliminaries}
\label{sec:analysis_main}
\vspace{-0.2cm}

\boldhdr{Problem formulation}
In this work, we consider the instruction following setting. Let $\mathbf{x} = (x_1, \dots, x_L)$ denote an input sequence of length $L$, and $\mathbf{y} = (y_1, \dots, y_T)$ denote an output sequence of length $T$. At decoding step $t$, given the prefix $\mathbf{y}_{<t} := (y_1, \dots, y_{t-1})$, a teacher model produces a predictive distribution $p^t(\cdot) := p(\cdot \mid \mathbf{y}_{<t},  \mathbf{x}) \in (0, 1)^{V}$, where $V$ is the vocabulary size. A student model parameterized by $\theta$ produces
$q^{t}(\cdot) := q_\theta(\cdot \mid \mathbf{y}_{<t}, \mathbf{x}) \in (0, 1)^{V}$.
KD aligns student with teacher by minimizing a divergence between their predictive distributions, aggregated over decoding steps:
\begin{equation}
    \mathcal{L}_{\mathrm{KD}}(\theta)
    = \sum_{t=1}^T D (p^t \| q^{t} ).
\end{equation}
In this work, we primarily analyze the KL divergence at a fixed decoding step $t$. Therefore, we omit the time index $t$ for notational simplicity. Two common choices for the divergence measure $D$ are the forward and reverse KL divergences, which induce different training dynamics and empirical performance~\citep{kim2016sequence, agarwal2024policy}.

\boldhdr{Forward and reverse KL}
The standard KD objective minimizes the forward KL (FKL) divergence \citep{hinton2015distilling, kim2016sequence}, which is defined as:
\begin{equation}
    D_{\mathrm{KL}}(p\|q)=\sum_{j=1}^V p_j\log\frac{p_j}{q_j}.
\label{eq:fkl_def}
\end{equation}
Recent LLM distillation methods~\citep{gu2023minillm, agarwal2024policy} instead employ the reverse KL (RKL) divergence, often reporting superior empirical performance compared to FKL. The RKL is defined as:
\begin{equation}
    D_{\mathrm{KL}}(q\|p)=\sum_{j=1}^V q_j\log\frac{q_j}{p_j}.
\label{eq:rkl_def}
\end{equation}
FKL penalizes under-coverage of the teacher distribution, encouraging probability mass to be spread across many tokens, including the heavy tail. In contrast, RKL penalizes assigning probability to low-likelihood teacher tokens, concentrating mass on dominant modes and producing sharper predictions. While this intuition captures their qualitative differences, it does not fully explain the pronounced performance gap observed in modern LLM distillation. In particular, in regimes with extremely large vocabularies and significant teacher–student capacity mismatch, the two objectives exhibit fundamentally different optimization dynamics. We analyze this gap in Section~\ref{sec:method}.

\vspace{-0.2cm}
\section{Analyses and Methodology}\label{sec:method}
\vspace{-0.1cm}
\subsection{Structural Advantages of RKL over FKL in LLM Distillation}
\label{sec:equivalence_breakdown}
\vspace{-0.1cm}

FKL and RKL are theoretically expected to converge to the same solution under sufficient optimization~\citep{wu2025rethinking}. However, this result assumes sufficient student model capacity and exact optimization to the global optimum, such that the student can perfectly match the teacher distribution. These conditions often do not hold in modern LLM distillation. While~\citet{wu2025rethinking} attribute the practical gap mainly to limited training steps, we argue that it is also caused by two structural factors: large output spaces and significant teacher--student capacity mismatch. Under these conditions, RKL converges faster by focusing on dominant tokens, which simplifies alignment between teacher and student. In contrast, FKL enforces matching across the entire vocabulary, making optimization more difficult.
\begin{figure}[t]
    \centering
    \begin{subfigure}[t]{0.48\linewidth}
        \centering
        \includegraphics[width=\linewidth]{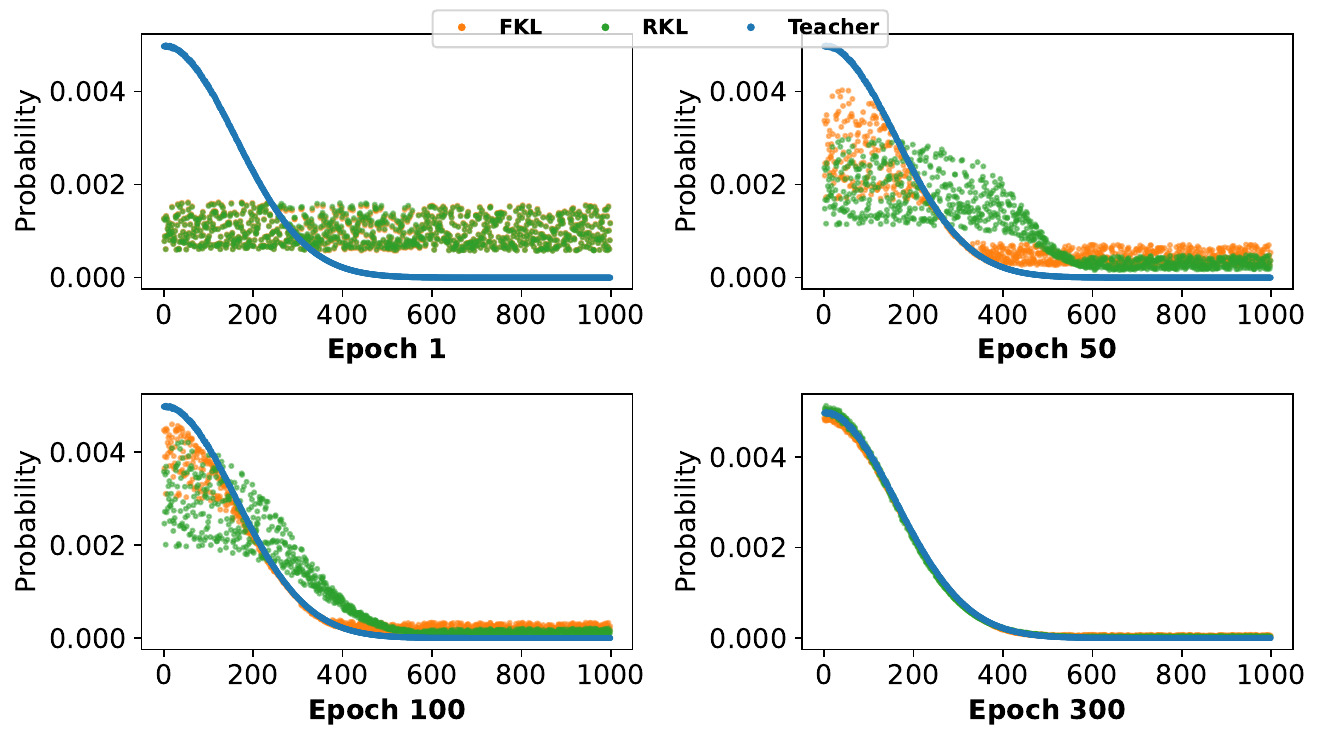}
        \caption{A small output space $V=1{,}000$.}
        \label{fig:small_vocab}
    \end{subfigure}
    \hfill
    \begin{subfigure}[t]{0.48\linewidth}
        \centering
        \includegraphics[width=\linewidth]{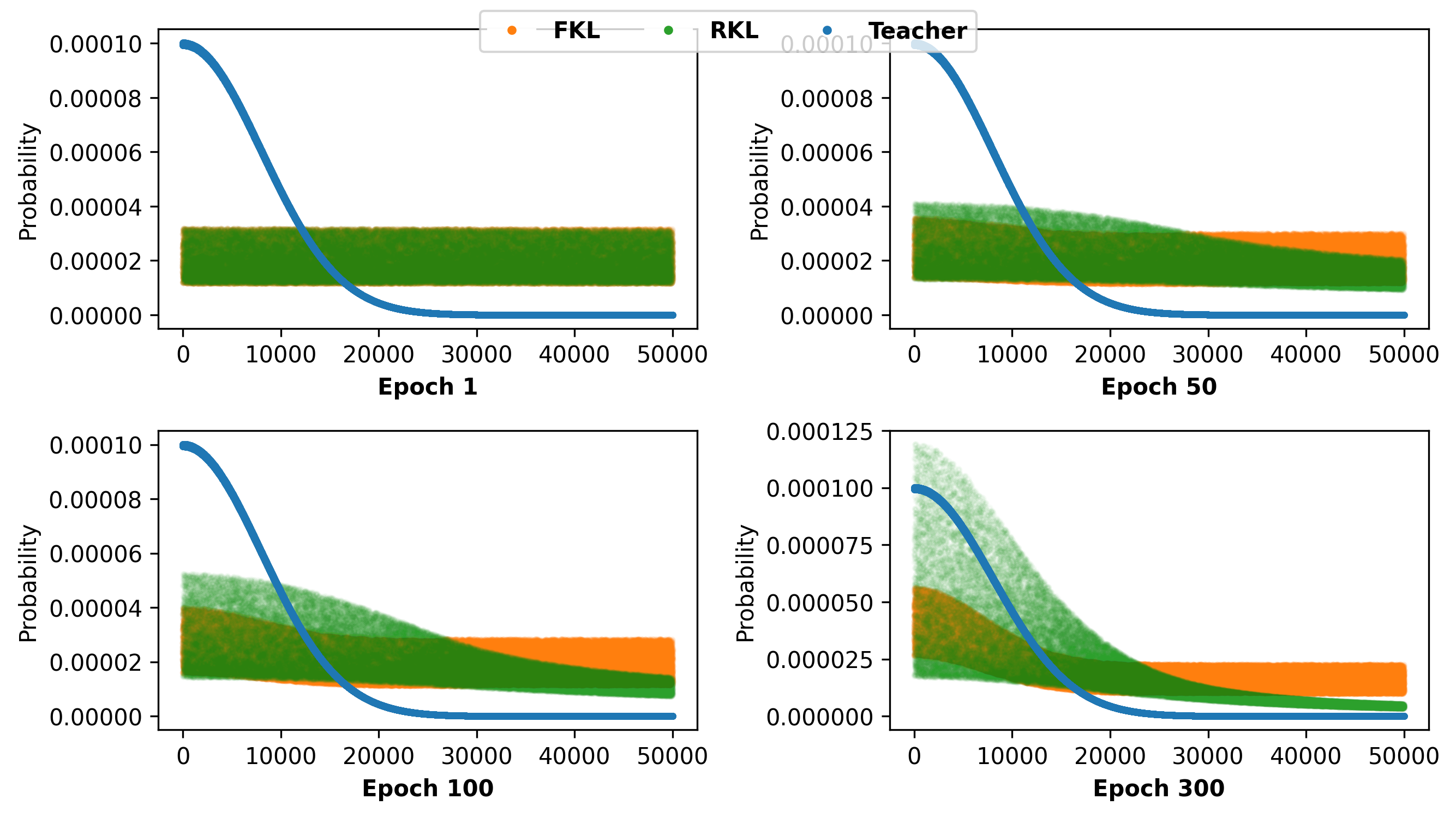}
        \caption{A large output space $V=50{,}000$.}
        \label{fig:large_vocab}
    \end{subfigure}
    \caption{FKL and RKL when fitting a teacher distribution under different output sizes. The x-axis shows the class index, and the y-axis shows the corresponding output probability. 
    As the number of classes increases, the optimization difficulty grows substantially.}
    \label{fig:longtail_kl_dynamics}
\end{figure}

\boldhdr{Effect of large output spaces}
Modern LLMs operate over extremely large vocabularies (e.g., $V \approx 50{,}000$ for GPT-2), and their predictive distributions are typically heavy-tailed. In such settings, the choice of divergence used for distillation significantly affects the optimization dynamics. Let $z_{s,j} \in \mathbb{R}$ denote the student logit for the $j$-th class. The gradient of the FKL is computed as:
\begin{align}
    \nabla_{z_{s,j}} D_{\mathrm{KL}}(p \| q) 
    = q_j - p_j .
\label{eq:fkl_grad}
\end{align}
This gradient provides supervision over all classes whenever the student and teacher distributions are misaligned. As a result, FKL produces dense signals across the vocabulary, with the number of active coordinates scaling with the vocabulary size $V$. In large output spaces, this requires matching many low-probability tokens, which spreads the optimization effort across the vocabulary. In contrast, the gradient of RKL is computed as:
\begin{equation}
    \nabla_{z_{s,j}} D_{\mathrm{KL}}(q\|p)
    =
    q_j
    \Big(
    \log \frac{q_j}{p_j}
    -
    D_{\mathrm{KL}}(q\|p)
    \Big),
    \label{eq:rkl_grad}
\end{equation}
which includes a scaling factor $q_j$ that suppresses gradient contributions from classes to which the student assigns negligible probability. Let $C \in (0,1)$ and define the active set
$
    \mathcal{A}_C = \{ j : q_j > C \}.
$ 
Only tokens in $\mathcal{A}_C$ receive substantial updates, while classes with $q_j \le C$ are effectively suppressed. As a result, the number of actively updated coordinates is much smaller than the full output space, $|\mathcal{A}_C| \ll V$. This behavior is illustrated in Figure~\ref{fig:longtail_kl_dynamics}: RKL quickly suppresses probability mass on tail tokens and focuses learning on high-probability classes. We interpret this as a key mechanism of RKL. By restricting optimization to a smaller set of tokens, it avoids matching the teacher over the full long-tail distribution, leading to more efficient optimization and better performance.

Figure~\ref{fig:longtail_kl_dynamics} further shows the optimization dynamics of FKL and RKL in a toy experiment with different vocabulary sizes. With a small vocabulary (1{,}000), the difference between FKL and RKL mainly appears in the early stages of training (e.g., epochs 50 and 100), while both methods converge to similar solutions by the end of training (epoch 300), as shown in Figure~\ref{fig:small_vocab}. However, when the vocabulary size increases to 50{,}000, the difference becomes clear: RKL converges significantly faster than FKL (e.g., at epoch 300 in Figure~\ref{fig:large_vocab}).

\boldhdr{Implications under capacity mismatch}
These differences are further amplified under teacher--student capacity mismatch. In practice, the teacher often has much higher capacity than the student, making it difficult for the student to match the full teacher distribution. FKL requires matching probabilities across the entire output space, including many low-probability tokens. Under limited student capacity, this dense matching is hard to achieve. In contrast, RKL focuses on the dominant modes of the distribution. Since gradients are weighted by the student probabilities, tokens with very small probability receive almost no updates. As a result, the number of active classes depends on the support of the student distribution rather than the vocabulary size:
$
|\mathrm{supp}(q)| \ll V .
$
This reduces the effective output space and makes the alignment problem easier when the student has limited capacity.

\begin{figure}[t]
    \centering
    \vspace{-2mm}
    \begin{subfigure}[t]{0.32\linewidth}
        \centering
        \includegraphics[width=\linewidth]{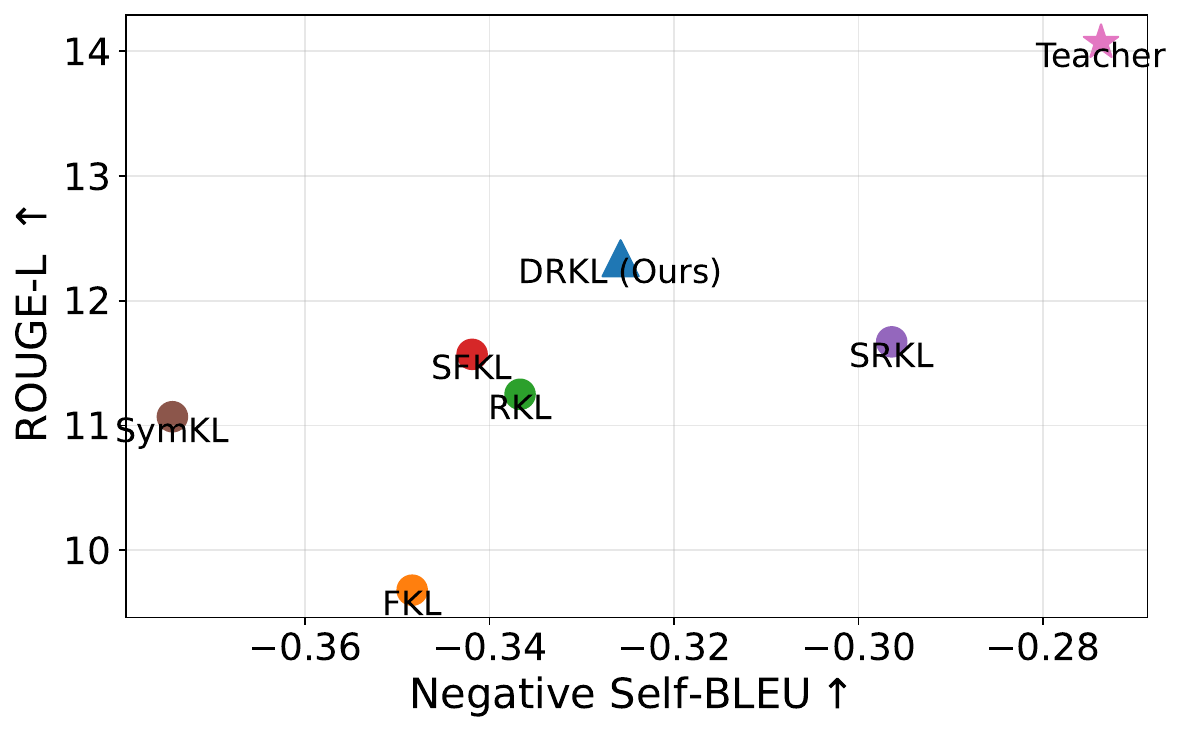}
        \vspace{-3mm}
        \caption{ROUGE-L vs. Neg Self-BLEU}
        \label{fig:fide_neg_self_bleu}
    \end{subfigure}
    \hfill
    \begin{subfigure}[t]{0.32\linewidth}
        \centering
        \includegraphics[width=\linewidth]{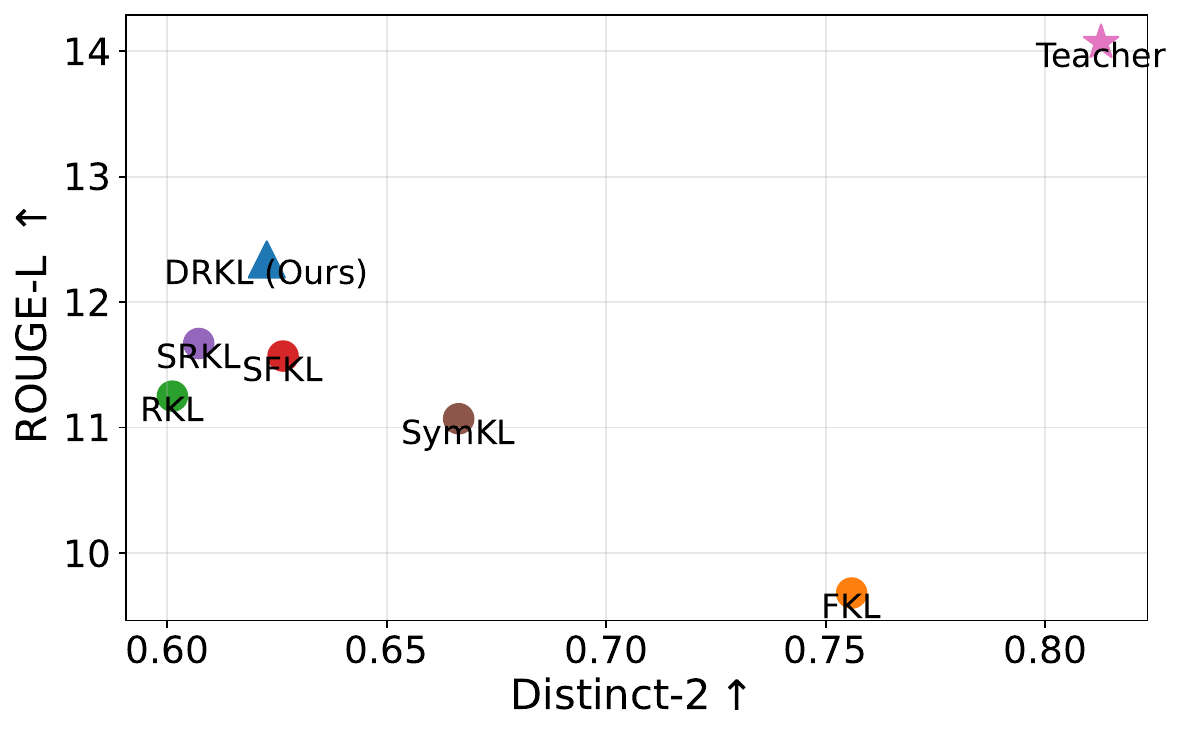}
        \vspace{-3mm}
        \caption{ROUGE-L vs. Distinct-2}
        \label{fig:distinct2}
    \end{subfigure}
    \hfill
    \begin{subfigure}[t]{0.32\linewidth}
        \centering
        \includegraphics[width=\linewidth]{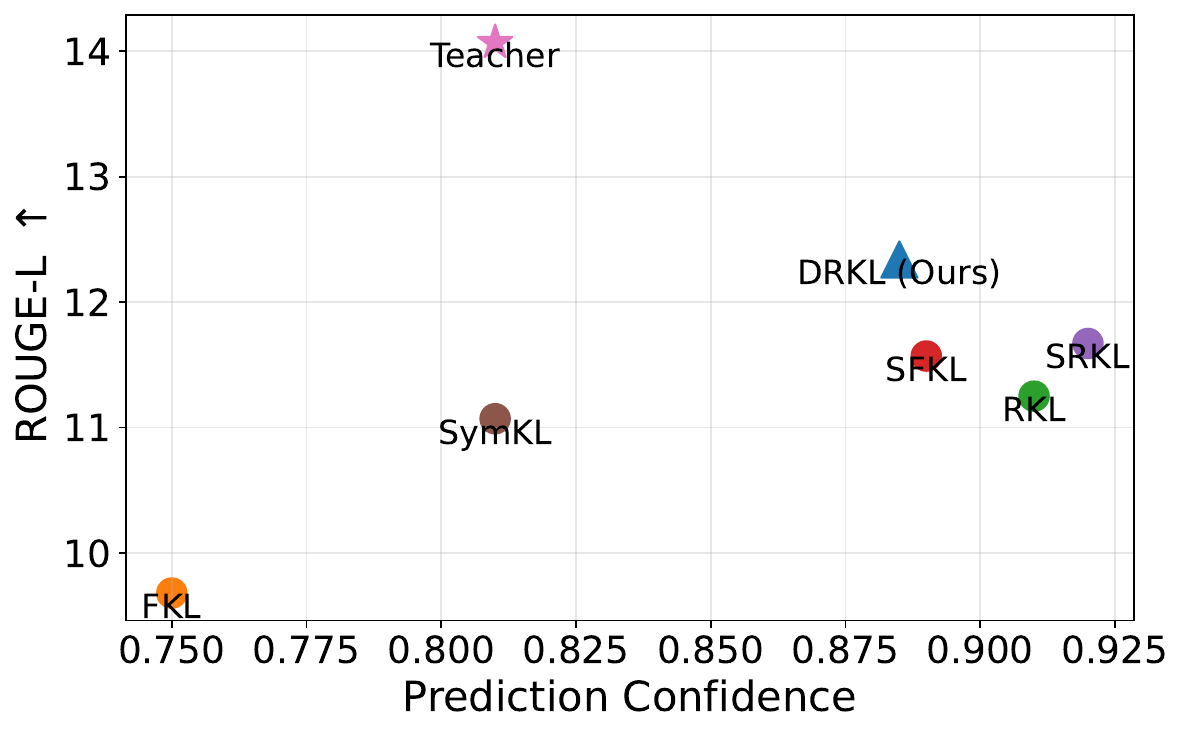}
        \vspace{-3mm}
        \caption{ROUGE-L vs. Pred conf}
        \label{fig:rouge_conf}
    \end{subfigure}
    \vspace{-2mm}
    \caption{
        (a, b) Fidelity vs.\ diversity: RKL reduces diversity (Negative Self-BLEU and Distinct-2), while DRKL achieves a better balance across methods. 
        (c) ROUGE-L vs.\ prediction confidence: RKL produces overconfident predictions without improving ROUGE-L, while DRKL is better calibrated.
    }
    \vspace{-4mm}
    \label{fig:main_tradeoff}
\end{figure}

\vspace{-0.2cm}
\subsection{Gradient Dynamics of Reverse KL}
\label{sec:reverse_analysis}
\vspace{-0.1cm}

\boldhdr{RKL produces overconfident students}
Despite its optimization advantages, RKL also leads to overconfident predictions and reduced diversity. Figures~\ref{fig:fide_neg_self_bleu} and~\ref{fig:distinct2} show this effect. We use ROUGE-L to measure fidelity, and Distinct-2~\citep{li2016diversity} and Negative Self-BLEU~\citep{zhu2018texygen} to measure diversity. Figure~\ref{fig:rouge_conf} plots ROUGE-L against prediction confidence.
As observed, RKL achieves higher ROUGE-L than FKL, indicating better alignment with the teacher. However, this comes with a clear drop in Distinct-2 (FKL 0.755 vs.\ RKL 0.600) and a large increase in prediction confidence (FKL 0.75 vs.\ RKL 0.91), which is even higher than the teacher. For Negative Self-BLEU, both methods show similar values ($-0.34$ for RKL and $-0.35$ for FKL). Importantly, the teacher achieves high fidelity and diversity, indicating that this imbalance is caused by the structure of KL objectives rather than a true trade-off.

\boldhdr{Why simple FKL--RKL interpolation fails}
A natural attempt to balance fidelity and diversity is to interpolate between FKL and RKL. However, this approach is often ill-conditioned due to the mismatch in gradient scales induced by the two KL directions. From Eq.~\eqref{eq:fkl_grad}, the gradient of FKL is bounded as $(q_j - p_j) \in (-1, 1)$, whereas the RKL gradient contains the term $\log(q_j / p_j)$, which can grow unbounded when $p_j \rightarrow 0$ with $q_j>0$. As a result, the combined gradient is dominated by RKL, making the interpolation coefficient ineffective in controlling the optimization.
A detailed analysis is provided in Appendix~\ref{appendix:grad_scale}.

\boldhdr{Target and non-target class alignment in RKL}
The overconfidence induced by RKL suggests a structural bias in its gradient updates. To characterize this behavior, we analyze RKL through a target and non-target class decomposition.
Prior work has applied similar decompositions to FKL~\citep{zhao2022decoupled, wei2024scaled, cui2024decoupled}, primarily in vision settings with moderate output dimensions. These studies focus on gradient vanishing effects for non-target classes. However, such analyses do not capture the structural regime of RKL in large-vocabulary LLM distillation. In this work, we show that under RKL, gradients from non-target classes consistently push the target logit upward. This interaction creates a confidence amplification mechanism: the target probability can continue increasing even after it matches or exceeds the teacher probability. As we show in the following analysis, this effect directly explains the overconfidence observed in practice.

\begin{proposition}[Target and non-target decomposition of RKL]
\label{prop:rev_dkd_statement}
Let $p, q \in \mathbb{R}^{V}$ denote the teacher and student output probabilities among V classes, respectively, and let $m$ denote the index of the target class. Define $\tilde p_m = (p_m, 1 - p_m) \in \mathbb{R}^{2}$ and $\tilde q_m = (q_m, 1 - q_m) \in \mathbb{R}^{2}$ as the binary probabilities of the target class and the aggregated non-target classes. Let $\hat{p} = \frac{ (p_k)_{k \not = m} } {\sum_{i \not = m} p_i} \in \mathbb{R}^{V-1}$
and $\hat{q} = \frac{ (q_k)_{k \not = m} } {\sum_{i \not = m} q_i} \in \mathbb{R}^{V-1}$ represent the normalized probabilities of non-target classes.
Then the RKL divergence in Eq.~\eqref{eq:rkl_def} can be decomposed as follows:
\begin{equation}
    D_{\mathrm{KL}}(q\|p)
    =
    \underbrace{D_{\mathrm{KL}}(\tilde q_m\|\tilde p_m)}_{\textbf{TRKL}}
    +
    \underbrace{(1-q_m)\, D_{\mathrm{KL}}(\hat q\|\hat p)}_{\textbf{NRKL}}.
\label{eq:rev_dkd_statement}
\end{equation}
\end{proposition}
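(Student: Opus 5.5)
The plan is to prove the identity by direct algebraic expansion. It is the reverse-KL analogue of the DKD decomposition of \citet{zhao2022decoupled}, with the roles of $p$ and $q$ swapped. To fix notation, write $\bar p_m := 1-p_m = \sum_{i\neq m} p_i$ and $\bar q_m := 1-q_m = \sum_{i\neq m} q_i$. The normalized non-target distributions are then $\hat p_k = p_k/\bar p_m$ and $\hat q_k = q_k/\bar q_m$ for $k\neq m$. All of these quantities are strictly positive, because $p,q\in(0,1)^V$ by assumption. Hence every logarithm below is well defined, and $\bar p_m,\bar q_m\in(0,1)$.

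First, I will split the sum in Eq.~\eqref{eq:rkl_def} into the target term and the non-target terms:
\begin{equation*}
D_{\mathrm{KL}}(q\|p) = q_m\log\frac{q_m}{p_m} + \sum_{k\neq m} q_k\log\frac{q_k}{p_k}.
\end{equation*}
In each non-target summand I substitute $q_k=\bar q_m\hat q_k$ and $p_k=\bar p_m\hat p_k$. The logarithm then factors as $\log\frac{q_k}{p_k} = \log\frac{\bar q_m}{\bar p_m} + \log\frac{\hat q_k}{\hat p_k}$, so the non-target sum becomes
\begin{equation*}
\sum_{k\neq m}\bar q_m\hat q_k\log\frac{\bar q_m}{\bar p_m} + \bar q_m\sum_{k\neq m}\hat q_k\log\frac{\hat q_k}{\hat p_k}.
\end{equation*}

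Next, I use the normalization $\sum_{k\neq m}\hat q_k = 1$. This collapses the first part to $\bar q_m\log\frac{\bar q_m}{\bar p_m}$. The second part is $(1-q_m)D_{\mathrm{KL}}(\hat q\|\hat p)$ by definition, which is the NRKL term.

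Finally, I add back the target term $q_m\log\frac{q_m}{p_m}$. Together with $\bar q_m\log\frac{\bar q_m}{\bar p_m}$, it forms exactly the binary divergence $D_{\mathrm{KL}}(\tilde q_m\|\tilde p_m)$, which is the TRKL term. This gives Eq.~\eqref{eq:rev_dkd_statement}.

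There is no real obstacle here. The only point needing care is which distribution does the weighting. In the reverse direction the weights are $q_k$, so the scaling factor that comes out is $1-q_m$ (the student's non-target mass) rather than $1-p_m$ as in the forward DKD. The collapse step must therefore use the normalization of $\hat q$, not of $\hat p$. Positivity of $p$ and $q$ rules out any division-by-zero or $0\log 0$ issues.
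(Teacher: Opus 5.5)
Your proposal is correct and follows the paper's proof essentially step for step: you split off the target term, write $q_k=(1-q_m)\hat q_k$ and $p_k=(1-p_m)\hat p_k$, factor the logarithm, collapse using $\sum_{k\neq m}\hat q_k=1$, and recombine $q_m\log\frac{q_m}{p_m}+(1-q_m)\log\frac{1-q_m}{1-p_m}$ into the binary divergence. Your remark that positivity of $p,q\in(0,1)^V$ makes every logarithm well defined is a small addition that the paper leaves implicit.
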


\begin{proposition}[Target gradient under RKL]
\label{prop:rev_tckd_instability}
Under the notation in Proposition~\ref{prop:rev_dkd_statement}, let $z_{s,m}$ denote the student logit for the target index $m$. The gradients of the TRKL and NRKL term with respect to $z_{s,m}$ are given by
\begin{align}
    &\frac{\partial \mathcal{L}_{\mathrm{TRKL}}}{\partial z_{s,m}} 
    = q_m(1 - q_m)
    \log \frac{q_m (1-p_m)}{p_m(1-q_m)} \label{eq:grad_rev_tckd}
    \\
    &\frac{\partial \mathcal{L}_{\mathrm{NRKL}}}{\partial z_{s,m}} 
    = -q_m(1 - q_m)\,
    D_{\mathrm{KL}}(\hat q \| \hat p).
    \label{eq:grad_rev_nckd_target}
\end{align}
\end{proposition}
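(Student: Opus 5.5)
We treat both terms of the decomposition in Proposition~\ref{prop:rev_dkd_statement} as functions of the student logits $z_s$, with $q=\mathrm{softmax}(z_s)$. We then differentiate with respect to $z_{s,m}$ via the chain rule. The one basic fact needed is the softmax Jacobian, $\partial q_j/\partial z_{s,m} = q_j(\delta_{jm}-q_m)$. From it we get $\partial q_m/\partial z_{s,m}=q_m(1-q_m)$ and $\partial(1-q_m)/\partial z_{s,m} = -q_m(1-q_m)$.

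\textbf{Step 1 (TRKL term).} We write $\mathcal{L}_{\mathrm{TRKL}} = q_m\log\frac{q_m}{p_m} + (1-q_m)\log\frac{1-q_m}{1-p_m}$, which depends on $z_s$ only through $q_m$. Its derivative in $q_m$ is
\begin{equation*}
\log\frac{q_m}{p_m}+1-\log\frac{1-q_m}{1-p_m}-1=\log\frac{q_m(1-p_m)}{p_m(1-q_m)}.
\end{equation*}
Multiplying by $\partial q_m/\partial z_{s,m}=q_m(1-q_m)$ gives Eq.~\eqref{eq:grad_rev_tckd}.

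\textbf{Step 2 (NRKL term).} The key observation is that $\hat q$ does not depend on $z_{s,m}$. Indeed, $\hat q_k = q_k/\sum_{i\neq m}q_i = e^{z_{s,k}}/\sum_{i\neq m}e^{z_{s,i}}$, because the common softmax normalizer cancels. So $\hat q$ is a softmax over the non-target logits alone. To be safe, we can also check this directly: $\partial \hat q_k/\partial z_{s,m} = \big(-q_kq_m(1-q_m) + q_k q_m(1-q_m)\big)/(1-q_m)^2=0$. Since $\hat p$ is fixed, $D_{\mathrm{KL}}(\hat q\|\hat p)$ is constant in $z_{s,m}$. By the product rule, the only contribution then comes from the prefactor $(1-q_m)$:
\begin{equation*}
\frac{\partial \mathcal{L}_{\mathrm{NRKL}}}{\partial z_{s,m}} = \frac{\partial(1-q_m)}{\partial z_{s,m}}\,D_{\mathrm{KL}}(\hat q\|\hat p) = -q_m(1-q_m)\,D_{\mathrm{KL}}(\hat q\|\hat p).
\end{equation*}
This is Eq.~\eqref{eq:grad_rev_nckd_target}.

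The only nonroutine point is Step 2, namely recognizing that the renormalized non-target distribution is invariant to the target logit. Everything else is a direct chain-rule computation.

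As a sanity check, the two gradients should sum to the $j=m$ case of Eq.~\eqref{eq:rkl_grad}, $q_m\big(\log\frac{q_m}{p_m}-D_{\mathrm{KL}}(q\|p)\big)$. This follows by substituting the decomposition of Proposition~\ref{prop:rev_dkd_statement} for $D_{\mathrm{KL}}(q\|p)$ and simplifying. We would include this check to confirm the signs.
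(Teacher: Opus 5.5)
Your proposal is correct and follows the paper's proof essentially step for step: the chain rule through $q_m$ for the TRKL term, and for the NRKL term the product rule plus the same quotient-rule computation showing $\hat q$ is independent of $z_{s,m}$, so only the $(1-q_m)$ prefactor contributes. Two things you add that the paper does not have are the observation that $\hat q$ is simply a softmax over the non-target logits, and the consistency check against Eq.~\eqref{eq:rkl_grad}; both are correct.
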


The proofs of Propositions~\ref{prop:rev_dkd_statement} and~\ref{prop:rev_tckd_instability} are provided in Appendices~\ref{appendix:proof_31} and~\ref{appendix:proof_32}, respectively.

\boldhdr{Non-target loss is poorly learned under RKL}
Proposition~\ref{prop:rev_dkd_statement} shows that the non-target loss is scaled by $(1 - q_m)$, which decreases as $q_m$ increases. This creates a shortcut where the NRKL loss is reduced by increasing $q_m$, rather than by aligning the non-target distribution. As a result, the non-target classes receive insufficient supervision and are poorly learned. This effect is further illustrated in Figure~\ref{fig:loss_comparison}, where the non-target loss remains significantly higher than the target loss.

\boldhdr{Structural instability of the target gradient}
Proposition~\ref{prop:rev_tckd_instability} shows that the NRKL term contributes a non-positive gradient to the target logit and remains active whenever the non-target distributions are misaligned, i.e., whenever $D_{\mathrm{KL}}(\hat q \| \hat p) > 0$.
Consequently, even if the target probability is perfectly matched ($q_m = p_m$), the total target gradient does not become zero unless the non-target distribution also matches the teacher. Under teacher--student capacity mismatch, this full alignment is usually not achievable. Therefore, as long as mismatch in non-target classes remains, RKL continues to push the target logit upward.

\vspace{-0.2cm}
\subsection{Diversity-aware RKL (DRKL)}
\label{sec:drkd}
\vspace{-0.1cm}

Our analysis in Section~\ref{sec:reverse_analysis} shows that the NRKL term introduces a persistent upward gradient on the target logit, even when the target probability is already aligned with the teacher, leading to confidence amplification and and overconfident predictions under RKL. To address this issue, we introduce DRKL, a modified RKL objective that reduces confidence amplification while preserving the optimization advantages of RKL. Specifically, given the decomposition of RKL into a target term (TRKL) and a non-target term (NRKL), DRKL retains the RKL formulation over the binary probabilities of the target class and the aggregated non-target classes, but replaces the $(1 - q_m)$ weighting in the non-target term (NRKL) with a fixed hyperparameter $\gamma \in \mathbb{R}^{+}$:
\begin{align}
    \mathcal{L}_{\mathrm{DRKL}}
    &= D_{\mathrm{KL}}(\tilde q_m\|\tilde p_m)
    + \gamma\, D_{\mathrm{KL}}(\hat q\|\hat p)
\label{eq:ddkd_full}
\end{align}
\begin{wrapfigure}{r}{0.41\linewidth}
    \vspace{-0.5cm}
    \centering
    \includegraphics[width=\linewidth]{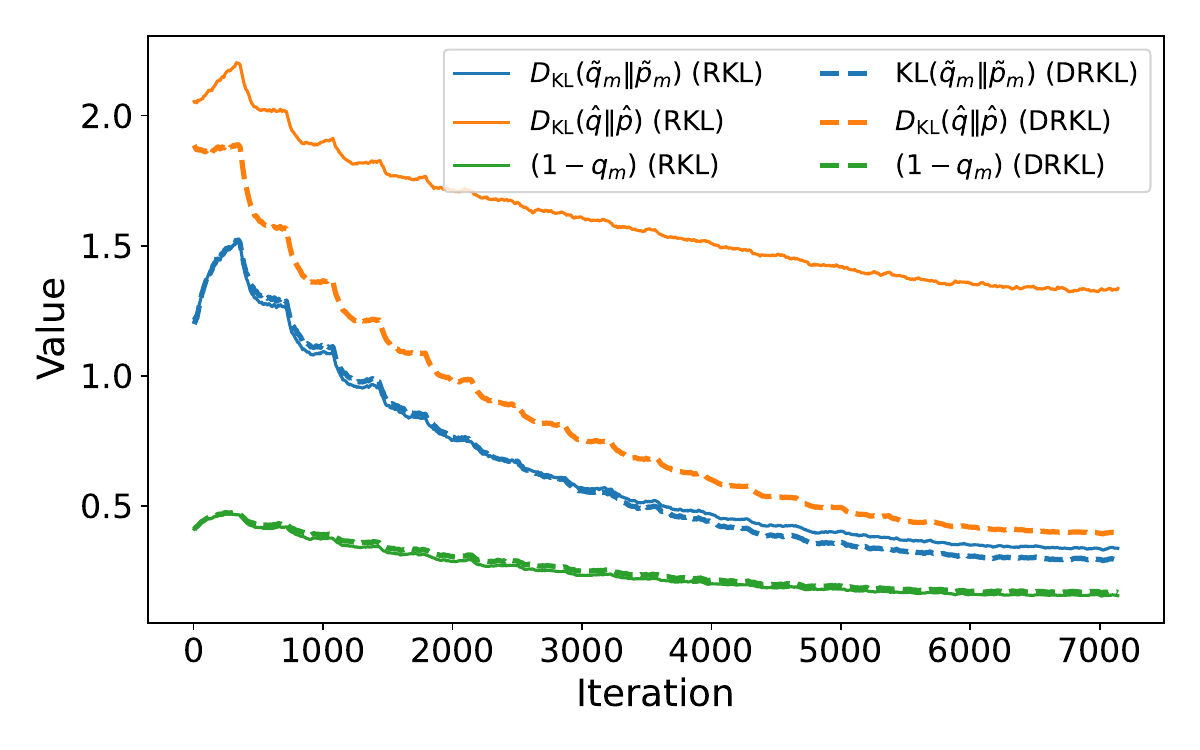}
    \vspace{-0.8cm}
    \caption{Losses comparison.}
    \label{fig:loss_comparison}
    \vspace{-0.7cm}
\end{wrapfigure}
where $m$ denotes the target token, and $\hat p, \hat q$ denote the normalized non-target distributions. By removing the $(1 - q_m)$ scaling factor from the NRKL term, DRKL eliminates the gradient from the non-target term with respect to $z_{s,m}$, making it zero and preventing non-target mismatch from driving the target confidence upward. At the same time, the weighted non-target divergence regularizes the ``dark knowledge'' in the tail distribution, encouraging better alignment and improved output diversity.

\boldhdr{Comparison of target and non-target losses}
Figure~\ref{fig:loss_comparison} compares the values of $D_{\mathrm{KL}}(\tilde q_m\|\tilde p_m)$, $D_{\mathrm{KL}}(\hat q\|\hat p)$, and $(1 - q_m)$ during distillation of an OPT-1.3B student from an OPT-6.7B teacher for both RKL and DRKL. The non-target loss $D_{\mathrm{KL}}(\hat q\|\hat p)$ remains high for RKL even at the end of training, while DRKL reduces it more quickly, indicating better learning of the non-target distribution. For the target loss $D_{\mathrm{KL}}(\tilde q_m\|\tilde p_m)$, DRKL also achieves lower values in later stages of training, suggesting more appropriate target confidence. This is consistent with $(1 - q_m)$, where RKL exhibits slightly higher confidence on the target class than DRKL.

\vspace{-0.2cm}
\section{Experiments}
\label{sec:experiments}
\vspace{-0.1cm}
\subsection{Experimental setup}
\vspace{-0.1cm}

\boldhdr{Training}
For training data, we use the instruction–response dataset from~\citet{gu2023minillm}, constructed from \texttt{databricks-dolly-15k}~\citep{DatabricksBlog2023DollyV2}. The dataset contains 14k training samples, 500 validation samples, and 500 test samples.
We first fine-tune the teacher models and then perform knowledge distillation to train the student models. As teachers, we use GPT-2 XL~\citep{radford2019language} (1.5B parameters) and OPT~\citep{zhang2022opt} (6.7B parameters). The corresponding students are GPT-2 base (120M parameters), GPT-2 medium (340M parameters), and OPT (1.3B parameters). 

All experiments use identical training hyperparameters to ensure a fair comparison. Specifically, for both GPT-2 and OPT models, we use a batch size of 32 and train for 20 epochs. The learning rate is set to $5\times10^{-4}$ for GPT-2 Base and GPT-2 Medium, and $5\times10^{-5}$ for OPT-1.3B, following~\citet{gu2023minillm}. The maximum input length is 512 tokens for all models.

\boldhdr{Evaluation}
After training the student model via knowledge distillation from the teacher model on the \texttt{databricks-dolly-15k} dataset~\citep{DatabricksBlog2023DollyV2}, we evaluate the student on several instruction-following benchmarks: the Dolly validation set, Dolly Eval, Self-Instruct~\citep{wang2023self}, Vicuna Eval~\citep{vicuna2023}, Super-Natural Instructions (Super-NI)~\citep{wang2022super}, and Unnatural Instructions (UnNI)~\citep{honovich2023unnatural}.
We report ROUGE-L~\citep{lin2004rouge} averaged over five random seeds $\{10, 20, 30, 40, 50\}$. 
Model checkpoints are saved at each epoch, and evaluation results are reported from the checkpoint achieving the best validation ROUGE-L. The decoding temperature is set to 1 by default.

\boldhdr{Baselines}
We compare DRKL with other KL-based distillation objectives, including FKL, RKL~\citep{gu2023minillm}, symmetric KL (Sym-KL), which combines the two via $0.5\,\mathrm{FKL} + 0.5\,\mathrm{RKL}$, DKD~\citep{zhao2022decoupled}, Jensen–Shannon divergence (JS)~\citep{agarwal2024policy}, skewed FKL (SFKL)~\citep{ko2024distillm} with smoothing $\alpha=0.1$, skewed RKL (SRKL)~\citep{ko2024distillm} with smoothing $\alpha=0.1$, Adaptive KL (AKL)~\citep{wu2025rethinking}, and $\alpha$-$\beta$ divergence (AB)~\citep{wang2025abkd} with parameters $(\alpha=0.2, \beta=0.7)$.
We focus on the off-policy distillation setting, where training is performed using fixed teacher-generated responses. Therefore, we do not compare with methods that require on-policy sampling or additional external datasets.
All methods are trained under the same setup, using the same dataset, identical hyperparameters, a shared teacher checkpoint, and the same student initialization to ensure a fair comparison. We follow the hyperparameter settings from the original works and use the implementation provided in DistilLM~\citep{ko2024distillm}.

\vspace{-0.2cm}
\subsection{Main Results}
\vspace{-0.1cm}

Table~\ref{tab:main1} reports the performance of teacher and student models under different training objectives. To isolate the effect of the distillation objective, we remove auxiliary training settings, including cross-entropy supervision on ground-truth labels, initialization from an SFT-pretrained student, and on-policy distillation.

\boldhdr{GPT-2}
The results show that the proposed DRKL objective consistently outperforms other SOTA distillation methods. DRKL ranks first on all five benchmarks and achieves the highest average performance for both GPT-2 Base and Medium. Baseline methods exhibit inconsistent performance across benchmarks. For example, RKL performs best on Dolly, DKD on Vicuna, SRKL on Super-NI, and AB on UnNI.
In contrast, DRKL delivers consistently strong performance across all tasks, highlighting its robustness. On Super-NI and UnNI, DRKL surpasses RKL by more than $2$ and $2.6$ points, respectively. In Dolly and UnNI, DRKL improves over the second-best more than $0.8$ and $2.0$ points, respectively. A similar trend is observed for GPT-Medium (0.3B parameters), where DRKL ranks first on four out of five benchmarks. It outperforms RKL by $0.37$ to $1.5$ points across all benchmarks and exceeds the second-best method by over $1.5$ points on Dolly.

\begin{wrapfigure}{r}{0.4\linewidth}
    \vspace{-0.6cm}
    \centering
    \includegraphics[width=\linewidth]{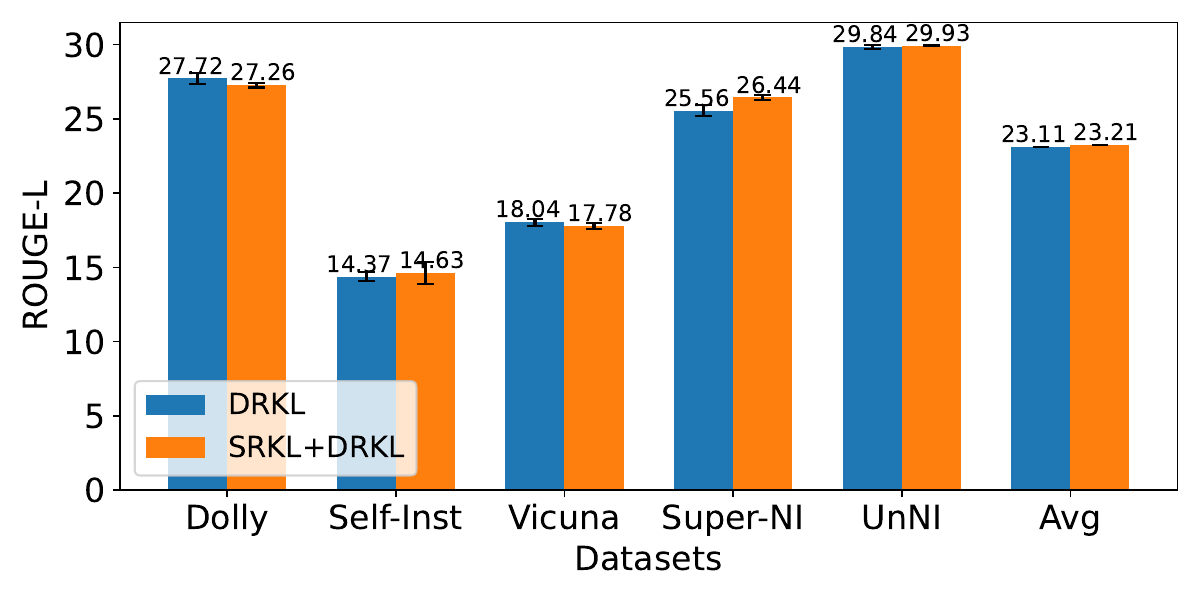}
    \vspace{-0.7cm}
    \caption{Performance of DRKL when combined with SRKL.}   
    \label{fig:combining}
    \vspace{-0.5cm}
\end{wrapfigure}

\boldhdr{OPT}
For larger student--teacher model pairs, DRKL consistently outperforms both baselines (FKL and RKL) and other state-of-the-art distillation objectives. It ranks first on all five benchmarks and achieves the highest average score across them. DRKL surpasses the second-best method (AB) and other baselines by margins ranging from $0.45$ to $3.02$ points, with the largest gap on UnNI, where DRKL achieves 29.84 compared to 26.82 for the second-best method.

\boldhdr{Combining with other approaches}
Since our method is built on RKL, it can be easily combined with existing RKL-based variants. To show this, we combine DRKL with skewed RKL (SRKL) for distillation from OPT-6.7B to OPT-1.3B.
As shown in Figure~\ref{fig:combining}, the combined method improves the average performance across five benchmarks. The improvement appears in 3 out of 5 benchmarks, with the largest gain on Super-NI, where the score increases from 25.56 to 26.44. This shows that DRKL is both effective and compatible with other RKL-based methods.


\boldhdr{Validation performance}
Figure~\ref{fig:validation_rougeL} shows that DRKL achieves higher validation ROUGE-L early in training, reaching nearly $27$ by the 7th epoch, while competing methods remain below $26$. This advantage persists across later epochs, where DRKL maintains the highest validation performance among all methods. Compared to RKL, DRKL reaches strong performance with fewer training iterations.
This improvement comes from DRKL's decoupled gradient structure, which removes interference between target and non-target components and provides more effective supervision for both.

\begin{table}[t]
    \centering
    \vspace{-4mm}
   \tiny
    \caption{Comparison of distillation losses for GPT-2 and OPT teacher--student pairs. We report ROUGE-L, with mean and standard deviation computed over five random seeds. The best results are \textbf{boldfaced} and the second-best are \underline{underlined}.}
    \vspace{-3mm}
    \resizebox{\linewidth}{!}{
        \begin{tabular}{lcccccc}
        \toprule
         \textbf{Method}  &  {\scriptsize \textbf{Dolly} }  &  {\scriptsize \textbf{Self-Instruct}} & {\scriptsize \textbf{Vicuna} }  & {\scriptsize \textbf{Super-NI} } & {\scriptsize \textbf{UnNI} } & {\scriptsize \textbf{Avg.} ($\uparrow$)}\\
        \midrule
        \rowcolor{green!10} GPT-2 XL (Teacher) & \rateinline{27.00}{0.19}&\rateinline{14.07}{0.37}&\rateinline{16.31}{0.32}&\rateinline{26.46}{0.41}&\rateinline{31.10}{0.06} &22.99 \\
        \midrule
        \multicolumn{6}{l}{\textbf{\textit{GPT-2 XL (1.5B) $\to$ GPT-2 Base (0.1B)}}} \\ 
        \midrule

        FKL &  \rateinline{23.80}{0.55} &\rateinline{9.68}{0.49} &\rateinline{14.53}{0.31} &\rateinline{16.27}{0.24} & \rateinline{19.03}{0.09} & 16.66 \\
        
        RKL{\scriptsize~\citep{gu2023minillm}} 
        &  \rateinline{\underline{24.67}}{0.13} 
        &\rateinline{11.25}{0.35} 
        &\rateinline{15.82}{0.43} 
        &\rateinline{21.03}{0.21} 
        & \rateinline{23.94}{0.12} 
        & 19.34 \\
        
        Sym-KL 
        &  \rateinline{24.38}{0.14} 
        &\rateinline{11.07}{0.43} 
        &\rateinline{15.72}{0.75} 
        &\rateinline{19.92}{0.25} 
        & \rateinline{22.21}{0.03} 
        & 18.66 \\
        
        DKD{\scriptsize~\citep{zhao2022decoupled}}  
        & \rateinline{22.88}{0.23} 
        & \rateinline{10.28}{0.29} 
        & \rateinline{\underline{16.12}}{0.31} 
        & \rateinline{16.57}{0.25} 
        & \rateinline{19.64}{0.13} 
        & 17.10 \\
        
        JS{\scriptsize~\citep{agarwal2024policy}} &\rateinline{24.08}{0.26} &\rateinline{\underline{11.67}}{0.33} &\rateinline{15.47}{0.37} &\rateinline{19.66}{0.23} & \rateinline{22.17}{0.21} & 18.61 \\

        SFKL{\scriptsize~\citep{ko2024distillm}}  
        & \rateinline{24.44}{0.42} 
        & \rateinline{11.57}{0.15} 
        & \rateinline{15.35}{0.30} 
        & \rateinline{22.01}{0.23} 
        & \rateinline{23.47}{0.12} 
        & 19.37 \\
        
        SRKL{\scriptsize~\citep{ko2024distillm}} 
        & \rateinline{24.48}{0.37} 
        & \rateinline{10.69}{0.21} 
        & \rateinline{14.96}{0.34} 
        & \rateinline{\underline{23.25}}{0.36} 
        & \rateinline{24.01}{0.05} 
        & 19.48 \\
        
        AKL{\scriptsize~\citep{wu2025rethinking}}  
        & \rateinline{21.83}{0.19} 
        & \rateinline{9.57}{0.24} 
        & \rateinline{13.70}{0.21} 
        & \rateinline{15.40}{0.20} 
        & \rateinline{18.06}{0.13} 
        & 15.71 \\
        
        AB{\scriptsize~\citep{wang2025abkd}}  
        & \rateinline{24.32}{0.29} 
        & \rateinline{11.05}{0.21} 
        & \rateinline{15.82}{0.58} 
        & \rateinline{23.08}{0.19} 
        & \rateinline{\underline{24.32}}{0.14} 
        & \underline{19.72} \\
        
        \rowcolor{blue!10} 
        DRKL (Ours)  
        & \rateinline{\textbf{25.51}}{0.42} 
        & \rateinline{\textbf{12.34}}{0.58} 
        & \rateinline{\textbf{16.33}}{0.31} 
        & \rateinline{\textbf{23.39}}{0.11} 
        & \rateinline{\textbf{26.57}}{0.11} 
        & \textbf{20.83} \\
        
        \midrule
        \multicolumn{6}{l}{\textbf{\textit{GPT-2 XL (1.5B) $\to$ GPT-2 Medium (0.3B)}}} \\ 
        \midrule
        
        FKL 
        & \rateinline{\underline{25.10}}{0.33} 
        & \rateinline{11.45}{0.30} 
        & \rateinline{15.69}{0.28} 
        & \rateinline{21.10}{0.32} 
        & \rateinline{24.42}{0.21} 
        & 19.55 \\
        
        RKL{\scriptsize~\citep{gu2023minillm}} 
        & \rateinline{24.99}{0.26} 
        & \rateinline{11.89}{0.24} 
        & \rateinline{\underline{16.23}}{0.83} 
        & \rateinline{23.58}{0.42} 
        & \rateinline{25.84}{0.25} 
        & 20.51 \\
        
        Sym-KL 
        & \rateinline{24.77}{0.38} 
        & \rateinline{11.48}{0.13} 
        & \rateinline{14.32}{0.31} 
        & \rateinline{22.55}{0.08} 
        & \rateinline{25.36}{0.04} 
        & 19.70 \\
        
        DKD{\scriptsize~\citep{zhao2022decoupled}}  
        & \rateinline{24.81}{0.41} 
        & \rateinline{10.94}{0.39} 
        & \rateinline{15.15}{0.48} 
        & \rateinline{18.16}{0.23} 
        & \rateinline{22.16}{0.22} 
        & 18.24 \\
        
        JS{\scriptsize~\citep{agarwal2024policy}} 
        & \rateinline{24.94}{0.24} 
        & \rateinline{\underline{12.75}}{0.29} 
        & \rateinline{15.90}{0.24}  
        & \rateinline{24.09}{0.37} 
        & \rateinline{25.88}{0.21}  
        & 20.71 \\
        
        SFKL{\scriptsize~\citep{ko2024distillm}}  
        & \rateinline{\underline{25.10}}{0.22} 
        & \rateinline{12.12}{0.32} 
        & \rateinline{15.33}{0.37} 
        & \rateinline{21.33}{0.16} 
        & \rateinline{24.25}{0.12} 
        & 19.63 \\
        
        SRKL{\scriptsize~\citep{ko2024distillm}} 
        & \rateinline{23.99}{0.33} 
        & \rateinline{12.21}{0.21} 
        & \rateinline{14.91}{0.31} 
        & \rateinline{\underline{24.28}}{0.20}  
        & \rateinline{26.55}{0.17} 
        & 20.39 \\
        
        AKL{\scriptsize~\citep{wu2025rethinking}}
        & \rateinline{24.93}{0.36} 
        & \rateinline{11.07}{0.36}
        & \rateinline{15.20}{0.37}
        & \rateinline{19.77}{0.15}  
        & \rateinline{21.61}{0.11} 
        & 18.52 \\
        
        AB{\scriptsize~\citep{wang2025abkd}}  
        & \rateinline{24.52}{0.06} 
        & \rateinline{12.28}{0.05} 
        & \rateinline{16.11}{0.46} 
        & \rateinline{24.06}{0.21} 
        & \rateinline{\textbf{28.37}}{0.12} 
        & \underline{21.07} \\
        
        \rowcolor{blue!10} 
        DRKL (Ours) 
        & \rateinline{\textbf{26.50}}{0.41} 
        & \rateinline{\textbf{12.94}}{0.62} 
        & \rateinline{\textbf{16.50}}{0.22} 
        & \rateinline{\textbf{24.72}}{0.26} 
        & \rateinline{\underline{26.95}}{0.08} 
        & \textbf{21.51} \\

        \midrule  \midrule

        \rowcolor{green!10} OPT 6.7B (Teacher) & \rateinline{27.52}{0.29} &\rateinline{16.42}{0.69}&\rateinline{17.64}{0.27}&\rateinline{30.41}{0.46}&\rateinline{31.39}{0.20} & 24.78 \\
        
        \midrule

        \multicolumn{6}{l}{\textbf{\textit{OPT 6.7B $\to$ OPT 1.3B}}} \\ 
        
        \midrule
        
        FKL 
        & \rateinline{26.07}{0.65} 
        & \rateinline{12.84}{0.32} 
        & \rateinline{16.71}{0.33} 
        & \rateinline{22.11}{0.38} 
        & \rateinline{\underline{26.82}}{0.10}  & 20.91 \\

        RKL{\scriptsize~\citep{gu2023minillm}} 
        & \rateinline{26.58}{0.11} 
        & \rateinline{12.29}{0.74} 
        & \rateinline{17.54}{0.13} 
        & \rateinline{22.64}{0.16} 
        & \rateinline{26.29}{0.10}   & 21.07 \\

        Sym-KL 
        & \rateinline{25.73}{0.40} 
        & \rateinline{\underline{13.56}}{0.68} 
        & \rateinline{16.95}{0.37} 
        & \rateinline{23.40}{0.22} 
        & \rateinline{25.44}{0.07}  & 21.02 \\

        DKD{\scriptsize~\citep{zhao2022decoupled}}  
        & \rateinline{25.40}{0.32} 
        & \rateinline{12.70}{0.31} 
        & \rateinline{17.29}{0.40} 
        & \rateinline{21.95}{0.38} 
        & \rateinline{25.75}{0.18}  & 20.62 \\

        JS{\scriptsize~\citep{agarwal2024policy}} 
        & \rateinline{26.39}{0.71} 
        & \rateinline{12.63}{0.51} 
        & \rateinline{17.15}{0.39} 
        & \rateinline{24.02}{0.31} 
        & \rateinline{26.59}{0.11}  & 21.36 \\

        SFKL{\scriptsize~\citep{ko2024distillm}}  
        & \rateinline{25.98}{0.44} 
        & \rateinline{11.95}{0.84} 
        & \rateinline{16.87}{0.45} 
        & \rateinline{23.62}{0.19} 
        & \rateinline{26.15}{0.11}  & 20.91 \\

        SRKL{\scriptsize~\citep{ko2024distillm}} 
        & \rateinline{26.04}{0.36} 
        & \rateinline{11.99}{0.28} 
        & \rateinline{17.34}{0.25} 
        & \rateinline{22.96}{0.22} 
        & \rateinline{24.71}{0.22}  & 20.61 \\

        AKL{\scriptsize~\citep{wu2025rethinking}}  
        & \rateinline{26.20}{0.27} 
        & \rateinline{13.11}{0.31} 
        & \rateinline{17.26}{0.66} 
        & \rateinline{22.12}{0.18} 
        & \rateinline{25.89}{0.20}  & 20.92 \\

        AB{\scriptsize~\citep{wang2025abkd}}  
        & \rateinline{\underline{26.86}}{0.17} 
        & \rateinline{12.93}{0.47} 
        & \rateinline{\underline{17.59}}{0.41} 
        & \rateinline{\underline{24.39}}{0.11} 
        & \rateinline{26.62}{0.09}  & \underline{21.68} \\

        \rowcolor{blue!10} DRKL (Ours)  
        & \rateinline{\textbf{27.72}}{0.36} 
        & \rateinline{\textbf{14.37}}{0.31} 
        & \rateinline{\textbf{18.04}}{0.24} 
        & \rateinline{\textbf{25.56}}{0.36} 
        & \rateinline{\textbf{29.84}}{0.15}  & \textbf{23.11} \\

    \bottomrule
    \end{tabular}
    }
    \vspace{-6mm}
    \label{tab:main1}
\end{table}

\vspace{-0.2cm}
\subsection{Ablation Studies}
\vspace{-0.1cm}

\boldhdr{Sensitivity to the hyperparameter $\gamma$}
We analyze the sensitivity of DRKL to the hyperparameter $\gamma$. Figure~\ref{fig:sensitivity} reports the average ROUGE-L over five benchmarks for $\gamma \in \{0.5,1.0, \cdots, 3.5,4.0\}$. 
DRKL consistently outperforms baseline objectives (FKL, RKL, and Sym-KL) across all settings, demonstrating the robustness of the proposed objective. Performance remains stable for $\gamma \in [0.5, 2.0]$, where DRKL achieves consistently strong results (over 20). For larger values of $\gamma$, performance slightly degrades, because excessive weighting of non-target divergence overemphasizes non-target classes alignment and weakens supervision on the target class.
In practice, we use $\gamma \in \{0.5, 1.0\}$ for all experiments.

\begin{figure}[t]
    \centering
    \vspace{-6mm}
    \begin{subfigure}[t]{0.32\linewidth}
        \centering
        \includegraphics[width=\linewidth]{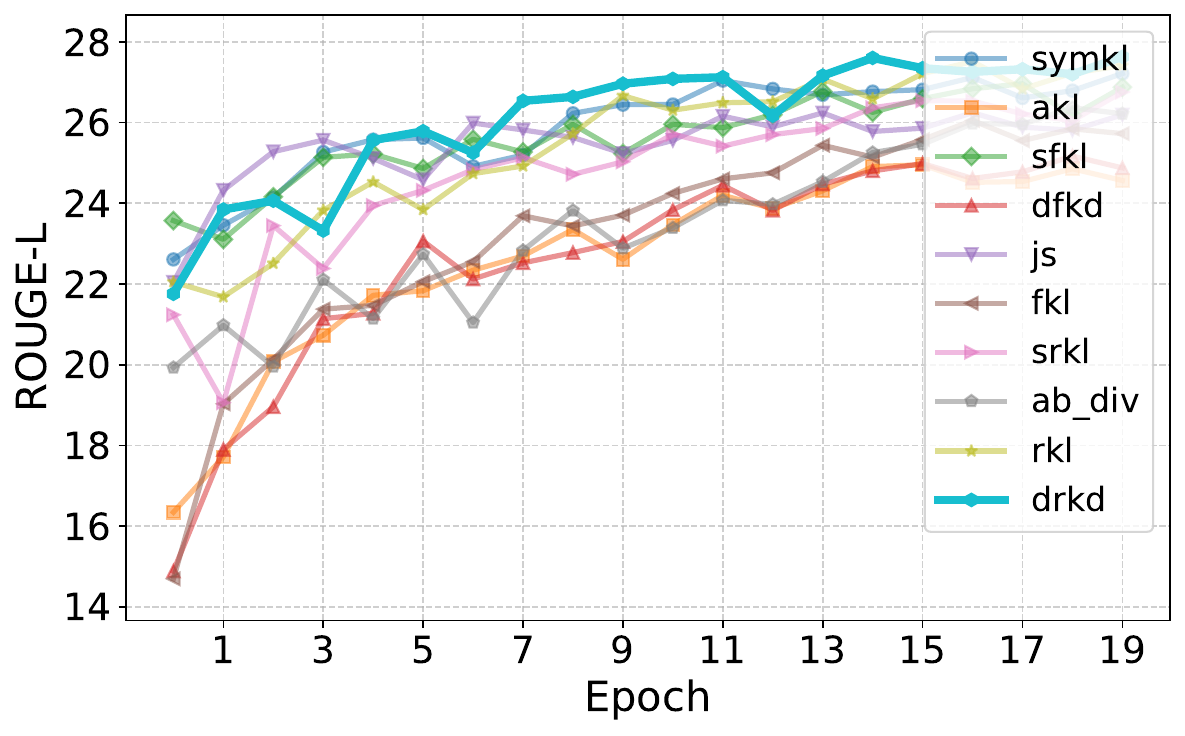}
        \vspace{-4mm}
        \caption{}
        \label{fig:validation_rougeL}
    \end{subfigure}
    \hfill
    \begin{subfigure}[t]{0.32\linewidth}
        \centering
        \includegraphics[width=\linewidth]{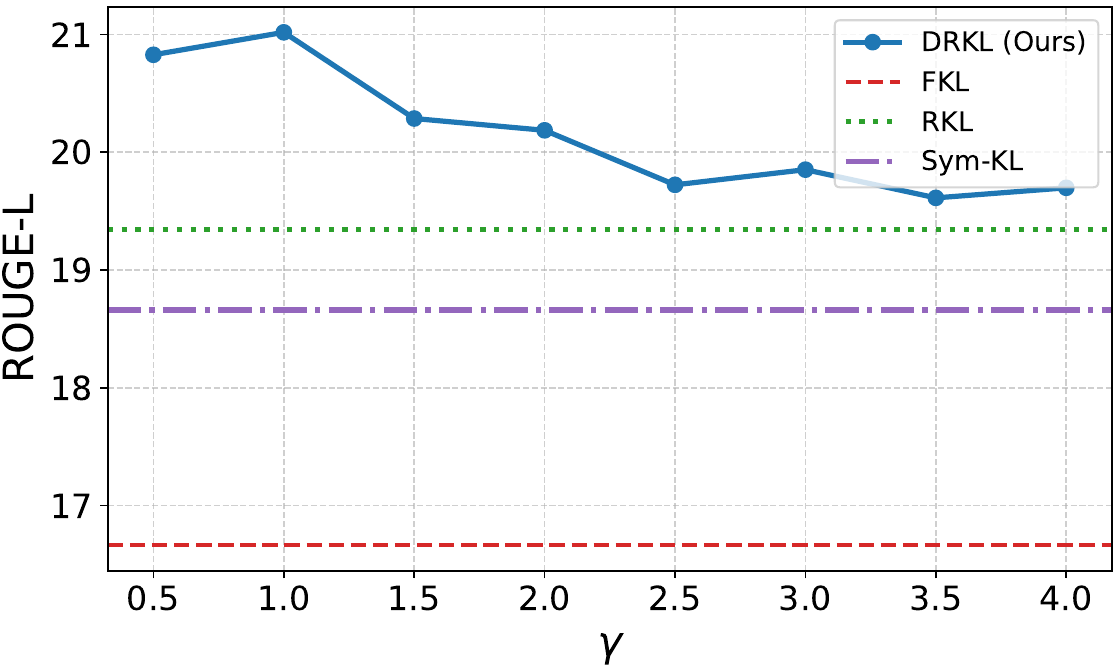}
        \vspace{-4mm}
        \caption{}
        \label{fig:sensitivity}
    \end{subfigure}
    \hfill
    \begin{subfigure}[t]{0.32\linewidth}
        \centering
        \includegraphics[width=\linewidth]{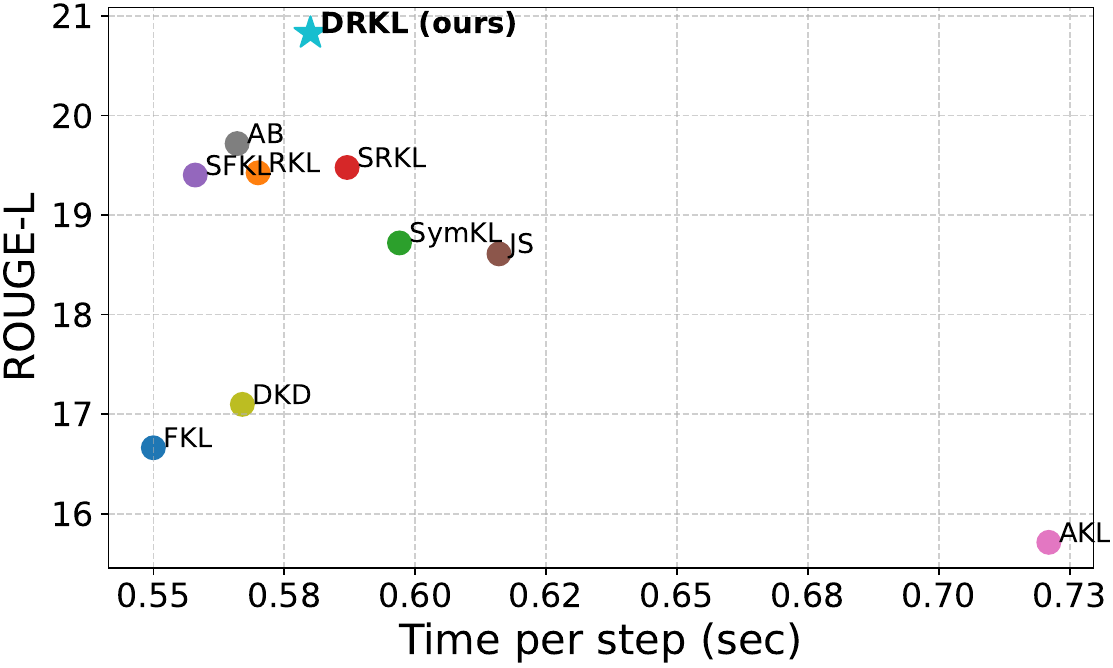}
        \vspace{-4mm}
        \caption{}
        \label{fig:computing_cost}
    \end{subfigure}
    \vspace{-2mm}
    \caption{
        (a) Validation performance of different distillation losses. (b) DRKL performance across different values of $\gamma$. (c) Efficiency analysis.
    }
    \vspace{-2mm}
    \label{fig:ablation}
\end{figure}


\boldhdr{Efficiency–performance trade-off}
We analyze the trade-off between compute cost and distillation performance by comparing the average ROUGE-L across five benchmarks with per-step computation time (Figure~\ref{fig:computing_cost}). DRKL achieves the best balance between efficiency and performance, obtaining the highest average ROUGE-L while maintaining a comparable cost of 0.58s per step.
DRKL outperforms strong baselines such as AB, RKL, and SFKL, while being only slightly slower (0.58s vs.\ 0.57s for AB and RKL, and 0.56s for SFKL).


\boldhdr{Improved diversity}
Figure~\ref{fig:main_tradeoff} compares fidelity and diversity across different distillation objectives, using Distinct-2 and Negative Self-BLEU as diversity metrics. DRKL achieves a clear improvement in diversity over RKL, with Negative Self-BLEU improving from $-0.34$ to $-0.325$ and Distinct-2 increasing from $0.601$ to $0.622$, while also reducing overconfident predictions.
Overall, DRKL achieves a better balance between fidelity and diversity compared to other distillation objectives, confirming that removing the gradient that pushes target confidence upward is key to improving diversity.

\boldhdr{Case study}
Table~\ref{table:case_study_unnatural_main} shows model outputs on the Unnatural test set~\citep{honovich2023unnatural} under different distillation objectives. The student trained with DRKL generates responses that follow the given instructions and match the ground truth.
In contrast, students trained with other distillation objectives often fail to produce correct outputs. Some models misunderstand the instruction (e.g., generating a statement instead of a question), while others produce responses that are not closely related to the input. More case studies are provided in Appendix~\ref{appendix:case_study}.

\begin{table*}[t]
\centering
\caption{Qualitative examples from the Unnatural~\citep{honovich2023unnatural} dataset for instruction following. The student OPT-1.3B model is distilled from the OPT-6.7B teacher. DRKL generates responses that better satisfy the given instructions compared to baseline methods.}
\vspace{-5pt}
\resizebox{\textwidth}{!}{
\addtolength{\tabcolsep}{2.5pt}
    \begin{tabular}{m{3cm}m{20cm}}
    \toprule[0.1em]
            \multicolumn{2}{c}{\textbf{\textit{Case study \# 1}}} \\ \midrule[0.1em]
            Instruction & Given a paragraph with different names for the same thing, write a question that asks about this. \\ \midrule
            Input & Passage: The United States of America is composed of fifty states, each with its own state capital. California's capital is Sacramento, while Florida's state capital is Tallahassee and Texas' capital city is Austin.\\ \midrule
            \rowcolor{green!10} Ground-truth & What is the capital of the United States of America? \\
            \midrule
            \midrule
            FKL    & Yes, the United States of America is made of fifty states, each having its own state capital. The states that comprise the United States of America are California, Florida, Pennsylvania, and Texas. \\
            \midrule
            RKL   & A passage describing the United States of America includes the phrase "each with its own state capital". This questions implies that the United States is composed of anywhere between 50 and 100 states, with each state capital being its own distinct entity. \\
            \midrule
            Sym-KL    & The United States of America is comprised of fifty states, each of which has its own state capital.\texttt{<n>}California's state capital is Sacramento, while Georgia's state capital is Atlanta and Texas' capital city, Austin. \\
            \midrule
            DKD    & The United States of America is comprised of fifty states, each having its own state capital. Sacramento is the capital for California, while Tallahassee and Austin are the capital cities for Florida and Texas respectively. \\
            \midrule
            JS  & What is the distinction between Sacramento and Tallahassee?  \\ \midrule
            SFKL    & What is the name of the state capital for the United States of America? \\
            \midrule
            SRKL    & The United States of America is comprised of fifty states, each featuring its own state capital. Sacramento is California's capital, while Tallahassee is Florida's state capital. \\
            \midrule
            AKL    & What is the name of the United States of America? \\
            \midrule
            AB    & Given a paragraph that has the same words but has different locations, write a question that answers this. \\
            \midrule
            \rowcolor{blue!10} DRKL (Ours)    & What is the capital of the United States of America? \\
            \bottomrule
    \bottomrule[0.1em]
    \end{tabular}
}\label{table:case_study_unnatural_main}
\vspace{-0.4cm}
\end{table*}

\vspace{-0.3cm}
\section{Conclusion}
\vspace{-0.3cm}

In this paper, we show that under the structure of LLM distillation, RKL outperforms FKL by simplifying the alignment between teacher and student. However, it also leads to overconfident student predictions, often exceeding the teacher's confidence.
To address this issue, we introduce DRKL, which removes the gradient from the non-target loss that consistently pushes the target confidence upward, as supported by our theoretical analysis. Extensive experiments further demonstrate its effectiveness.




\newpage

\bibliography{colm2026_conference}
\bibliographystyle{colm2026_conference}

\newpage

\appendix
\section{Appendix}

This appendix provides additional theoretical analysis, detailed proofs, discussions of our submission, and supplementary experimental results that support the main paper. The contents are organized as follows:
\begin{itemize}[leftmargin=20pt]
    \item Appendix~\ref{appendix:proof_31} presents the proof of Proposition~\ref{prop:rev_dkd_statement}, which derives the target and non-target decomposition of RKL.

    \item Appendix~\ref{appendix:proof_32} provides the proof of Proposition~\ref{prop:rev_tckd_instability}, characterizing the gradient contributions of target and non-target terms.

    \item Appendix~\ref{appendix:grad_scale} analyzes the gradient-scale mismatch between forward and reverse KL, explaining why naive interpolation between the two objectives is ill-conditioned.

    \item Appendix~\ref{appendix:toy_example} presents controlled toy experiments that illustrate the behavior of different distillation objectives under capacity mismatch.

    \item Appendix~\ref{appendix:case_study} provides additional case studies comparing model outputs under different distillation objectives.

\end{itemize}

\subsection{Proof of Proposition~\ref{prop:rev_dkd_statement}}
\label{appendix:proof_31}

\begin{proof}
    By Eq.~\eqref{eq:rkl_def}, we separate the target index $m$ from the non-target indices $k \neq m$:
    \begin{equation}
        D_{\mathrm{KL}}(q\|p)
        =
        q_m \log \frac{q_m}{p_m}
        +
        \sum_{k \neq m} q_k \log \frac{q_k}{p_k}.
    \label{eq:rkl_split}
    \end{equation}
    
    By the definition of the normalized probabilities of non-target classes,
    \begin{equation}
        \hat q_k = \frac{q_k}{\sum_{i \not = m} q_{i}},
        \
        \hat p_k = \frac{p_k}{1-p_m}, \ \text{for all } k \not = m.
        \label{eq:norm_prob_non_target_def}
    \end{equation}
    Substituting it into the second term of Eq.~\eqref{eq:rkl_split} and using $\sum_{k \neq m} \hat q_k = 1$, we obtain
    \begin{align}
        \sum_{k \neq m} q_k \log \frac{q_k}{p_k}
        &=
        \sum_{k \neq m} (1-q_m)\hat q_k
        \log \frac{(1-q_m)\hat q_k}{(1-p_m)\hat p_k} \nonumber \\
        &=
        (1-q_m)\sum_{k \neq m} \hat q_k
        \log \left(
        \frac{1-q_m}{1-p_m}\cdot \frac{\hat q_k}{\hat p_k}
        \right) \nonumber \\
        &=
        (1-q_m)\sum_{k \neq m} \hat q_k
        \left(
        \log \frac{1-q_m}{1-p_m}
        +
        \log \frac{\hat q_k}{\hat p_k}
        \right) \nonumber \\
        &=
        (1-q_m)\log \frac{1-q_m}{1-p_m}\sum_{k \neq m} \hat q_k
        +
        (1-q_m)\sum_{k \neq m} \hat q_k \log \frac{\hat q_k}{\hat p_k} \nonumber \\
        &=
        (1-q_m)\log \frac{1-q_m}{1-p_m}
        +
        (1-q_m)\sum_{k \neq m} \hat q_k \log \frac{\hat q_k}{\hat p_k} \nonumber \\
        &=
        (1-q_m)\log \frac{1-q_m}{1-p_m}
        +
        (1-q_m)D_{\mathrm{KL}}(\hat q\|\hat p).
        \label{eq:non_target_kl}
    \end{align}
    
    Substituting Eq.~\eqref{eq:non_target_kl} into Eq.~\eqref{eq:rkl_split} and noticing that $\tilde q_m = (q_m, 1-q_m)$ and $\tilde p_m = (p_m, 1-p_m)$, we have
    \begin{align}
    D_{\mathrm{KL}}(q\|p)
    &=
    q_m \log \frac{q_m}{p_m}
    +
    (1-q_m)\log \frac{1-q_m}{1-p_m}
    +
    (1-q_m)D_{\mathrm{KL}}(\hat q\|\hat p) \nonumber \\
    &= 
    D_{\mathrm{KL}}(\tilde q_m\|\tilde p_m)
    +
    (1-q_m)D_{\mathrm{KL}}(\hat q\|\hat p). \nonumber
    \label{eq:rkl_before_trkl}
    \end{align}
    This completes the proof.
\end{proof}

\subsection{Proof of Proposition~\ref{prop:rev_tckd_instability}}
\label{appendix:proof_32}

\begin{proof}    
    Since $q= \frac{\exp(z_{s,k})_{k}}{\sum_{i} \exp(z_{s,i})}$, the softmax derivative gives
    \begin{equation}
    \frac{\partial q_m}{\partial z_{s,m}}
    =
    q_m(1-q_m), \ \ \frac{\partial q_k}{\partial z_{s,m}} = -q_k q_m \text{ for all } k \not = m.
    \label{eq:softmax_derivatives}
    \end{equation}
    
    \paragraph{Part I: Gradient of the reverse TRKL term.}
    We have
    \begin{equation*}
        \mathcal{L}_{\mathrm{TRKL}}
        =
        D_{\mathrm{KL}}(\tilde q_m\|\tilde p_m)
        =
        q_m \log \frac{q_m}{p_m}
        +
        (1-q_m)\log \frac{1-q_m}{1-p_m}.
    \end{equation*}
    We compute the derivatives of the two terms on the right-hand side as follows   
    \begin{align*}
        \frac{\partial}{\partial q_m}
        \left(
        q_m \log \frac{q_m}{p_m}
        \right)
        &=
        \log \frac{q_m}{p_m} + 1,
        \\
        \frac{\partial}{\partial q_m}
        \left(
        (1-q_m)\log \frac{1-q_m}{1-p_m}
        \right)
        &=
        -\log \frac{1-q_m}{1-p_m} - 1.
    \end{align*}
    
    Combining this, Eq.~\eqref{eq:softmax_derivatives} and chain rule, we obtain
    \begin{align}
        \frac{\partial \mathcal{L}_{\mathrm{TRKL}}}{\partial z_{s,m}}
        &=
        \frac{\partial q_m}{\partial z_{s,m}}
        \frac{\partial \mathcal{L}_{\mathrm{TRKL}}}{\partial q_m} \nonumber \\
        &= q_m(1-q_m)
        \log \frac{q_m (1-p_m)}{p_m (1-q_m)}. \nonumber
    \end{align}
    
    \paragraph{Part II: Gradient of the reverse NRKL term.}
    Using the definition of NRKL and the chain rule, we have
    \begin{align}
        \frac{ \partial \mathcal{L}_{\mathrm{NRKL}}}{\partial z_{s,m}}
        &=
        \frac{\partial \left( (1-q_m) D_{\text{KL}}( \hat{q} \| \hat{p} ) \right)}{\partial z_{s,m}} \nonumber \\
        &= \frac{\partial (1-q_m)}{\partial z_{s, m}} D_{\text{KL}}( \hat{q} \| \hat{p} )  + (1-q_m)\frac{\partial D_{\text{KL}}( \hat{q} \| \hat{p} ) }{\partial z_{s,m}}.
        \label{eq:nrkl_chain_rule}
    \end{align}

    Using Eq.~\eqref{eq:softmax_derivatives}, we now show that $\hat{q}$ does not depend on $z_{s,m}$. Indeed, for any $k \not = m$, we have
    \begin{align}
        \frac{\partial \hat{q}_k} {\partial z_{s,m}} 
        &= \frac{\partial \frac{q_k}{1-q_m}}{\partial z_{s,m}} \nonumber \\
        &= \frac{(1-q_m) \frac{\partial q_k}{\partial z_{s,m}} - q_k \frac{\partial (1-q_m)}{\partial z_{s,m}}}{(1-q_m)^2} \nonumber \\
        &= \frac{ (1 - q_m) (-q_k q_m) - q_k(-q_m(1-q_m))}{ (1-q_m)^2} = 0. \nonumber
    \end{align}
    Substituting this and Eq.~\eqref{eq:softmax_derivatives} into Eq.~\eqref{eq:nrkl_chain_rule}, we have
    \begin{equation*}
        \frac{ \partial \mathcal{L}_{\mathrm{NRKL}}}{\partial z_{s,m}} = -q_m(1-q_m) D_{\text{KL}}( \hat{q} \| \hat{p} ).
    \end{equation*}
    This completes the proof.
\end{proof}

\subsection{Symmetric KL}
\label{appendix:grad_scale}

A natural approach to balance the fidelity of FKL and the diversity of RKL is to interpolate between the two objectives. However, such combinations are often ill-conditioned in practice. The two KL directions induce gradients with fundamentally different scales, so the interpolation coefficient does not reliably control the effective update direction.

\boldhdr{Gradient-scale mismatch}
Consider the weighted objective (Sym-KL) with $\alpha \in (0,1)$:
\begin{equation}
\mathcal{L}_{\mathrm{Sym-KL}}(q)
= \alpha\, D_{\mathrm{KL}}(p \| q)
+ (1-\alpha)\, D_{\mathrm{KL}}(q \| p).
\label{eq:mixture_def}
\end{equation}

\begin{wrapfigure}{r}{0.4\linewidth}
    \centering
    \includegraphics[width=\linewidth]{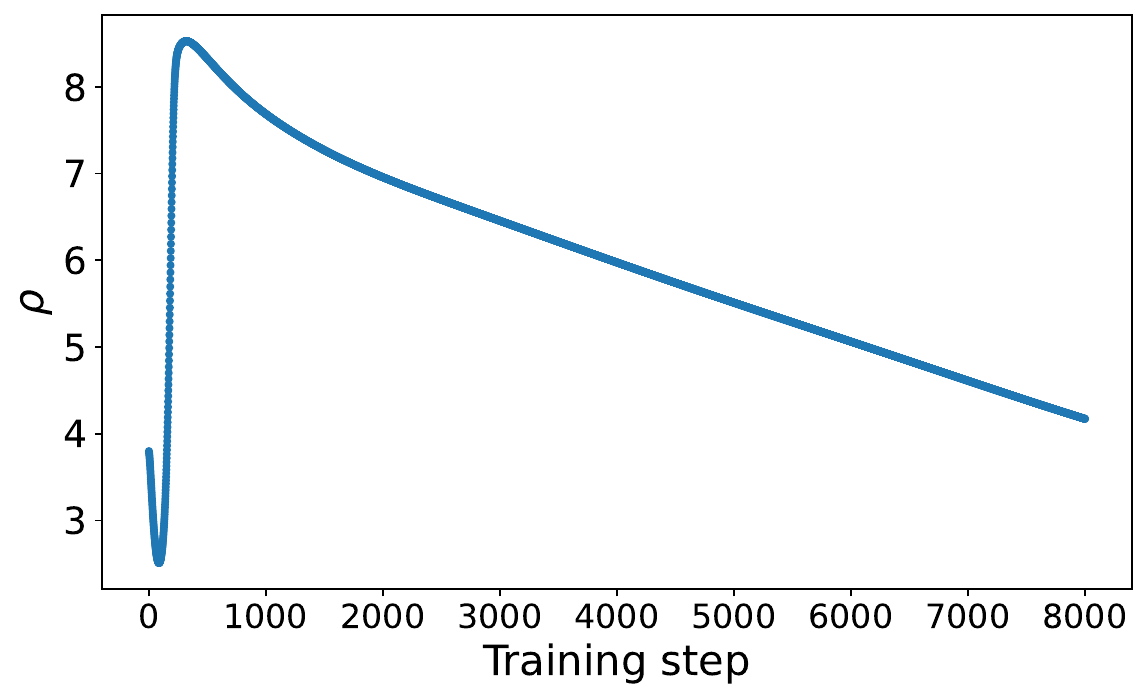}
    \vspace{-0.7cm}
    \caption{Ratio of RKL to FKL gradient norms, illustrating gradient-scale mismatch.}
    \label{fig:grad_ratio}
    \vspace{-0.5cm}
\end{wrapfigure}

The gradient of forward KL depends on the probability difference $(q_j - p_j)$ and is bounded in $(-1,1)$. In contrast, reverse KL depends on log-probability ratios $\log(q_j/p_j)$, which can become arbitrarily large when $p_j \rightarrow 0$. This is particularly pronounced in large-vocabulary settings, where many tail tokens have extremely small teacher probabilities.
Consequently, reverse KL gradients on tail tokens can dominate the weighted sum by orders of magnitude, especially in the early stages of training under teacher–student mismatch. This implies that naive loss interpolation is inherently biased toward reverse KL, regardless of the choice of $\alpha$.

To quantify this effect, we measure the ratio between reverse and forward KL gradient norms on the toy example:
\begin{equation}
\rho
= \frac{\|\nabla_{\theta} D_{\mathrm{KL}}(q \| p)\|_2}
       {\|\nabla_{\theta} D_{\mathrm{KL}}(p \| q)\|_2}.
\end{equation}
As shown in Figure~\ref{fig:grad_ratio}, $\rho > 1$ in the early stages of training (often exceeding 3 and reaching values as high as 8), confirming a persistent gradient-scale imbalance. Therefore, the weighted sum does not correspond to a meaningful interpolation of optimization behavior unless gradient scales are explicitly controlled.

\subsection{DRKL Improves Alignment with Teacher Distribution}\label{appendix:toy_example}

We present controlled toy experiments to provide an intuitive validation of our proposed method in comparison to existing KL-based objectives, focusing on alignment with the teacher distribution. The setup deliberately introduces a capacity mismatch: the teacher distribution is bimodal, while the student is constrained to be unimodal. 

\boldhdr{Setup}
The teacher density $p$ is a two-component Gaussian mixture on $\mathbb{R}$.
The student family $q$ is a unimodal Gaussian, which cannot represent both modes simultaneously.
Thus, any optimal solution must trade off mean-seeking (placing mass across both modes) against mode-seeking (placing mass on a single mode).

\begin{wrapfigure}{r}{0.4\linewidth}
    \vspace{-0.2cm}
    \centering
    \includegraphics[width=\linewidth]{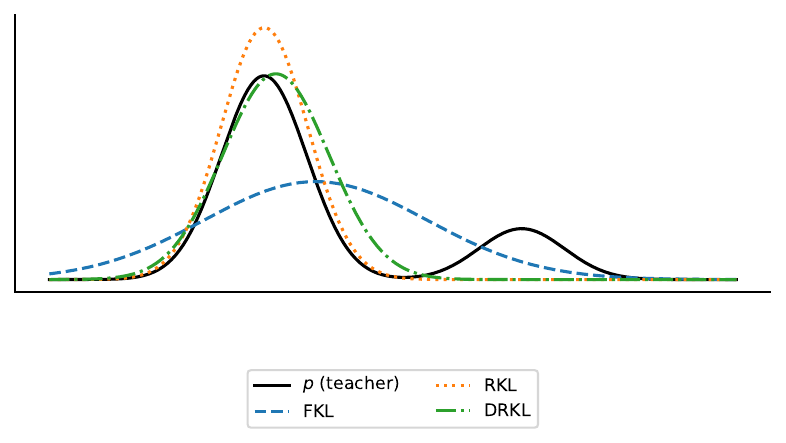}
    \vspace{-0.7cm}
    \caption{Teacher–student alignment under different distillation losses.}    \label{fig:toy_example}
    \vspace{-0.2cm}
\end{wrapfigure}

\boldhdr{RKL produces overconfident student}
Figure~\ref{fig:toy_example} shows that students trained with RKL exhibit a sharper, mode-seeking distribution, concentrating probability mass on the dominant mode while suppressing lower-probability modes. As a result, RKL assigns higher confidence to the dominant mode compared to the teacher, as well as to students trained with FKL and DRKL.
This behavior indicates that RKL encourages overconfident predictions by overemphasizing high-probability regions and neglecting the tail distribution. These observations are consistent with our theoretical analysis of RKL’s tendency to amplify target confidence through its gradient structure.

\boldhdr{DRKL mitigates overconfidence in RKL}
Figure~\ref{fig:toy_example} shows that students trained with DRKL retain the mode-seeking behavior of RKL while achieving better alignment with the dominant mode of the teacher distribution, without exhibiting the overconfident predictions induced by RKL. 
This improvement is consistent with our analysis, as DRKL removes the gradient pathway responsible for confidence amplification by decoupling the target and non-target components. As a result, DRKL preserves the optimization advantages of RKL while avoiding its tendency to amplify target confidence.

\begin{table*}[t]
\centering
\caption{Qualitative examples from the Unnatural~\citep{honovich2023unnatural} dataset for instruction following. The student OPT-1.3B model is distilled from the OPT-6.7B teacher. DRKL generates responses that better satisfy the given instructions compared to baseline methods.}
\vspace{5pt}
\resizebox{\textwidth}{!}{
\addtolength{\tabcolsep}{2.5pt}
\begin{tabular}{m{3cm}m{20cm}}
\toprule[0.1em]
        \multicolumn{2}{c}{\textbf{\textit{Case study \# 2}}} \\ \midrule[0.1em]
        Instruction & In this task, you will be given a piece of text and asked to report what someone said. Your answer should contain all the necessary information about who said what, but it can be delivered in any format. \\ \midrule
        Input & "I'm so excited for the party tonight!" cried Sarah. \\ \midrule
        \rowcolor{green!10} Ground-truth & Sarah said "I'm so excited for the party tonight!" \\
        \midrule
        \midrule
        FKL    & "I'm so excited for tonight!" cried Sarah \\
        \midrule
        RKL   & Sarah is so excited for the party tomorrow! She lives in the city and is going to have a blast. The weather is finally good, so she can go out and play. She typically brings along her two little crushes, who are her little guards. She plans to buy them each a party band, in memory of her late boyfriend. \\
        \midrule
        Sym-KL    & Text: "Brian! Brian, I'm so excited for the Party tonight!" cried Sarah. \\
        \midrule
        DKD    & "I'm so excited for tonight!" cried Sarah. "If only my boyfriend would stop saying he's going to die \& I wouldn't have to go!" yelled Sarah.\texttt{<n><n>}Sarah moved to the living room, where she saw her boyfriend, Travis, sitting around a poker table.\texttt{<n>}"Hey, bubs, what happened?" Sydney asked.\texttt{<n><n>}Travis responded: "A few guys won a game of poker, and the drank too much," Stephen said.\texttt{<n><n>}Sarah looked angry and began to cry. "I can't handle this!" She said.\texttt{<n><n>}Travis got up from the table and walked around the corner to the living room. "I'm going to have the most epic party ever!" He yelled. Sarah followed him out the door.\texttt{<n><n>}Outside, a crowd was forming. Caleb and Dustin were holding a red puppet, and cheered. Marc, Lauren, and David were playing keep-up on the piano, while Rachels and Mary Jane were dancing nearby. Tom was fighting with his girlfriend, Ann, over a toy, that they were using for a magna carte.\texttt{<n><n>}Harry, Ron, and Justin were outside, while Shai was in the house. An argument broke out about if the party was going \\
        \midrule
        JS  & Sarah was so excited for the party that she couldn't control her excitement. She started texting her friends immediately after she received the text. In her excitement, she wrote an overly dramatic response that made her friends laugh even though she was essentially saying the same thing.<n><n>The truth is, Sarah wasn't very excited about this party; she was literally expecting it to be a productivity dump, where everyone would share their best memes and quotes from episodes of TV shows. Instead, the party was a success, where people laughed and shared their favorite jokes from the shows they loved.  \\ \midrule
        SFKL    & Sarah is so excited for the party that she exhaled loudly. \\
        \midrule
        SRKL    & Sarah: "I'm so thrilled for the party tonight!" exclaimed Emma.\texttt{<n><n>}Emma: "I'm excited for the party, too! My friends and I are going to the party later tonight," replied Sarah.\texttt{<n><n>}Sarah: "I hope it's fun! We're going to theRate The Guys!" exclaimed Emma.\texttt{<n>}There were no references in the text to the time period or place where the event occurred, so your response should be accurate. \\
        \midrule
        AKL    & Sarah is excited for the party tonight! She has been waiting for months to go to a party. The party is tonight! \\
        \midrule
        AB    & Sarah was excited for the party tonight, but didn't know who the person was talking about. \\
        \midrule
        \rowcolor{blue!10} DRKL (Ours)    & Sarah said "I'm so excited for the party tonight!" \\
        \midrule \midrule

        \multicolumn{2}{c}{\textbf{\textit{Case study \# 3}}} \\ \midrule[0.1em]
        Instruction & You will be given a set of statements. Your job is to order them chronologically, from the earliest event to the latest event. Chronological order means that if Event A happened before Event B and Event C, then your output should have A before B and C in it. If two events happened at the same time, they do not need to be in any specific order with respect to each other as long as they are both after all of the events that happen before them and both before all of the events that happen after them. \\ \midrule
        Input & Statements: ['The glass fell three stories', 'It broke into pieces', 'A piece hit John on the head.']. \\ \midrule
        \rowcolor{green!10} Ground-truth & The glass fell three stories.\textbackslash n It broke into pieces.\textbackslash n A piece hit John on the head. \\
        \midrule
        \midrule
        FKL    & These statements are a list of events that occurred before and after each other, with the earliest event being The glass falling three stories and the latest being John hitting the head. \\
        \midrule
        RKL   & These statements describe a situation that happened three stories, which is similar to the sentence; "The glass fell three stories" or "It broke into pieces". The glass falling three stories, or the glass hitting John on the head, are analogous to the fact that Temperature T, an object that causes freezes and thaws, is limited to a range of warmer temperatures. Since the glass fell during the same time period, the sentence obviously implies that the glass itself is either frozen or defaqed, which is opposite to the fact that it is a glass and therefore necessarily contains elements of heat. \\
        \midrule
        Sym-KL    & These statements should be ordered chronologically:\texttt{<n><n>•}	First appearance: The glass fell three stories\texttt{<n>•}	Second appearance: It broke into pieces\texttt{<n>•}	Third appearance: John on the head of the glass\texttt{<n><n>}The correct order is:\texttt{<n><n>•}	Glass falling three stories\texttt{<n>•}	Breaking into pieces\texttt{<n>•}	John on the head of the glasses\texttt{<n>•}	John on Twitter \\
        \midrule
        DKD    & The glass fell three stories, "It broke into pieces", "A piece hit John onthe head". John is a witness to the events that occurred before and after it. \\
        \midrule
        JS  & These statements are a sequence of events that happen before each other but order does not limit the validity of the statement. For example, it could be that the glass fell three stories while happening to John, which would make this statement verifiable even though it does not specifically include the context of when the glass fell or why it broke.  \\ \midrule
        SFKL    & Statement: ('The glass fell three stories') \& 'It broke into pieces':\texttt{<n>}The first statement is incorrect because it does not describe the event that happened. It describes the state of the glass after it fell. Therefore the second statement is incorrect. However, it is possible to describe the state of the glass before it fell. In that case the statement would become 'The glass fell into pieces whilst being dropped three stories'. \\
        \midrule
        SRKL    & These statements are a sequence of events that happen chronologically. Since we know that limit of timeframe is based on the number of events that happen before and after each other, the order of the events is important to understand the relationships between each event and its respective timezone.\texttt{<n><n>}At a high level, the order of the statements is:\texttt{<n>}1. Glass falls three stories\texttt{<n>}2. It broke into pieces<n>3. A piece hit John on the back\texttt{<n><n>}This is simple enough to understand the context of the message and the meaning of the statement, but at a semantic level it can get more complicated as we start getting more specific about the context of each statement.\texttt{<n><n>}The first statement says that the glass fell three stories. If we expand the scope of the statement to include all of the events that happened before and after the glass fall, the additional information would be:\texttt{<n>}1. It broke into pieces.\texttt{<n>}2. A piece hit Johnon the head.\texttt{<n><n>}The second statement says that the glass fall happened before the head fall. If we expand the statement to include all the events that happened before the head fall, the additional information will be :\texttt{<n>}1. It brokeinto pieces.\texttt{<n>}2. It hit Johnon the head\texttt{<n>}3. A \\
        \midrule
        AKL    & These statements are a list of events that occurred before some specific date, in chronological order. \\
        \midrule
        AB    & He wanted to go for coffee; however, he did not want to go to Starbucks. \\
        \midrule
        \rowcolor{blue!10} DRKL (Ours)  & The glass fell three stories.\texttt{<n>}It broke into pieces.\texttt{<n>}A piece hit John on the head. \\
        \bottomrule
\bottomrule[0.1em]
\end{tabular}
}\label{table:case_study_unnatural_appendix_1}
\end{table*}

\subsection{Additional Case Studies}
\label{appendix:case_study}

Tables~\ref{table:case_study_unnatural_appendix_1} and~\ref{table:case_study_unnatural_appendix_2} present additional qualitative comparisons of model outputs under different distillation objectives. The student trained with DRKL generates responses that correctly follow diverse instructions and align with the ground truth. For example, it accurately (i) reports exact what someone said (Case study \#2), (ii) orders events chronologically (Case study \#3), (iii) extracts capitalized nouns from the input (Case study \#4), and (iv) fills in blanks based on provided options (Case study \#5).
In contrast, students trained with other distillation losses often fail to produce appropriate outputs, frequently misinterpreting the given instructions.

\begin{table*}[ht]
\centering
\caption{Qualitative examples from the Unnatural~\citep{honovich2023unnatural} dataset for instruction following. The student OPT-1.3B model is distilled from the OPT-6.7B teacher. DRKL generates responses that better satisfy the given instructions compared to baseline methods.}
\vspace{5pt}
\resizebox{\textwidth}{!}{
\addtolength{\tabcolsep}{2.5pt}
\begin{tabular}{m{3cm}m{20cm}}
\toprule[0.1em]
        \multicolumn{2}{c}{\textbf{\textit{Case study \# 4}}} \\ \midrule[0.1em]
        Instruction & Given a paragraph, write down all the proper nouns in the order they occur. A proper noun is a specific name for a person, place, or thing and is always capitalized. \\ \midrule
        Input & Passage: Toby thought that Mr. Krebs was being unfair when he gave out detentions for talking during class, but Mrs. Dvorak said that it was part of his job as a teacher.\\ \midrule
        \rowcolor{green!10} Ground-truth & Toby, Mr. Krebs, Mrs. Dvorak \\
        \midrule
        \midrule
        FKL    & Toby, Ms. Dvorak, and Mr. Krebs are all proper nouns. \\
        \midrule
        RKL   & Toby, Mrs. Dvorak, Mr. Krebs, Class \\
        \midrule
        Sym-KL    & Toby, Mrs. Dvorak \\
        \midrule
        DKD    & Toby, Mrs. Dvorak, Mr. Krebs \\
        \midrule
        JS  & Toby, Mrs. Dvorak, and Mr. Krebs Waited for the perfect opportunity to strike back at Mr. Krabs. When he was delivering the class talk, he stopped cold and asked them to exchange their places. Toby assumed that Mr. Krabs was being unfair because he was giving out detentions for speaking during class. Mrs. Dvorak explained that it was part of her job as a teacher to make sure that class was taught properly.  \\ \midrule
        SFKL    & Toby, Mrs. Dvorak \\
        \midrule
        SRKL    & Toby, Mrs. Dvorak, and Mr. Krebs all agreed that Mr. Krebs should have given out detentions for discussing classes during class, but Mrs., Ms., and Mr. Krebs disagreed.  Regarding Mr. Krebs' position on detentions, Mrs. Dvorac stated that it was his job to teach, and therefore he should teach what is required of him.  Regarding Mr. Bacon's position on speech, Mrs. Dvorack said that he was a teacher, and therefore he should impart wisdom and knowledge to his students.  Regarding Mr. Schmidt's position on language, Mrs. Dvoraken stated that he was a professor, and therefore he should use his knowledge to teach what is required of himself. \\
        \midrule
        AKL    & Toby, Mrs. Dvorak, and Mr. Krebs are all proper nouns. \\
        \midrule
        AB    & 1. Toby thought that Mr. Krebs was being unfair when He gave out detentions For talking during class, but Mr. Dvorak said it was part of his Job as a teacher. <n>2. Mrs. Dvorak added that it was his job to teach, and he should know what was expected of him. <n>3. Toby agreed, and added that he would remember this when he got home. \\
        \midrule
        \rowcolor{blue!10} DRKL (Ours)    & Toby, Mr. Krebs, Mrs. Dvorak \\
        \midrule \midrule

        \multicolumn{2}{c}{\textbf{\textit{Case study \# 5}}} \\ \midrule[0.1em]
        Instruction & You will be given a piece of text that is missing important information, and a list of words. Your job is to fill in the blank with one of the words from the list so that the sentence/paragraph makes grammatical sense. \\ \midrule
        Input & I asked if he wanted to go for coffee, but he said \_\_\_\_\_.\texttt{<n>}Word options: ['no', 'not', 'didnt'] \\ \midrule
        \rowcolor{green!10} Ground-truth & I asked if he wanted to go for coffee, but he said no. \\
        \midrule
        \midrule
        FKL    & "I asked if he wanted to going for coffee, but he says \_\_\_\_."\texttt{<n><n>}Thus, "I asked if he wanted go for coffee, but [name] said \_\_\_\_." This constitutes fixing the grammatical error in the sentence "I asked if he wanting to go for coffee, \_\_\_\_. \\
        \midrule
        RKL   & I asked if he wanted to going for coffee, but he says \_\_\_\_\_. \\
        \midrule
        Sym-KL    & word options: 'no', 'not','didnt' \\
        \midrule
        DKD    & Philip did not want to go for coffee. He said that he did not want to go to coffee because it was unhealthy. \\
        \midrule
        JS  & He would rather go for a walk.  \\ \midrule
        SFKL    & The sentence is grammatically incorrect because it should read as:\texttt{<n>}I asked if he wanted to take a coffee break, but he said \_\_\_\_.\texttt{<n>}The correct sentence is:\texttt{<n>}I asked if He wanted to take a coffee walk, but he said \_\_\_\_\_\_\_.\texttt{<n>}The grammatical problem with the statement is that it is missing the crucial phrase - hence it is incorrect grammatically. \\
        \midrule
        SRKL    & Because you provided no information in the sentence, and only provided a list of words, the context is deeply misunderstood, and your answer is incorrect. You should have provided more information in the text to clarify the context. \\
        \midrule
        AKL    & Iamar did not want to go for coffee; he said [."not", "didnt", "not"]. \\
        \midrule
        AB    & These statements are correct, but I would also recommend using the word ``to'' consistently in this case. 
        In the first statement, ``The glass fell three stories'' is a complete statement, while ``It broke into pieces'' is also a complete statement. 
        In the second statement, ``A piece hit John on the head'' is a correct statement, whereas ``A piece hit John onthe head'' and ``A piece hit Johnon thehead'' contain spacing errors and should be corrected. \\
        \midrule
        \rowcolor{blue!10} DRKL (Ours)  & The correct answer is 'no'. Unless he specifically asks for a coffee, he doesn't want to go for coffee. \\
        \bottomrule
\bottomrule[0.1em]
\end{tabular}
}\label{table:case_study_unnatural_appendix_2}
\end{table*}



\end{document}